\documentclass{article}

\usepackage{microtype}
\usepackage{graphicx}
\usepackage{subcaption}  
\usepackage{booktabs}
\usepackage{hyperref}
\usepackage{pifont}
\usepackage{float}

\usepackage[accepted]{icml2026}
\makeatletter
\renewcommand{\Notice@String}{}
\makeatother

\usepackage{amsmath,amssymb,amsthm}
\usepackage{mathtools}
\usepackage{bm}

\usepackage{algorithm}
\usepackage{algorithmic}
\usepackage{enumitem}
\usepackage{xcolor}

\theoremstyle{plain}
\newtheorem{theorem}{Theorem}[section]
\newtheorem{proposition}[theorem]{Proposition}
\newtheorem{lemma}[theorem]{Lemma}
\newtheorem{corollary}[theorem]{Corollary}
\theoremstyle{definition}

\theoremstyle{remark}

\newcommand{\R}{\mathbb{R}}

\icmltitlerunning{Rank-Aware Spectral Bounds on Attention Logits for Stable Low-Precision Training}

\begin{document}
\raggedbottom

\twocolumn[
\icmltitle{Rank-Aware Spectral Bounds on Attention Logits for Stable Low-Precision Training}

\begin{icmlauthorlist}
\icmlauthor{Seyed Morteza Emadi}{anon}
\end{icmlauthorlist}

\icmlaffiliation{anon}{UNC-Chapel Hill, Kenan-Flagler Business School, Chapel Hill, NC, USA}

\icmlcorrespondingauthor{Seyed Morteza Emadi}{seyed\_emadi@kenan-flagler.unc.edu}

\icmlkeywords{FP8 training, transformer stability, spectral norms, low-precision training}

\vskip 0.3in
]

\printAffiliationsAndNotice{}

\begin{abstract}
Attention scores in transformers are bilinear forms $S_{ij} = x_i^\top M x_j / \sqrt{d_h}$ whose maximum magnitude governs overflow risk in low-precision training. We derive a \emph{rank-aware concentration inequality}: when the interaction matrix $M = W^Q W^{K\top}$ has rank $r \ll d$, tail probabilities for $\max_{i,j}|S_{ij}|$ decay as $\exp(-d^{2}\alpha^{2}/(\gamma r))$ rather than $\exp(-d\alpha^{2})$, where $\gamma > 1$ is a typicality parameter. For transformer attention where $r = d_h$, this yields $8$--$28\times$ tighter concentration than rank-agnostic bounds in modern architectures. We apply this result to FP8 training, deriving \emph{geometry-aware scale factors} that provide principled overflow guarantees without observing activations. The method computes per-layer scales from the spectral norm $\|W^Q W^{K\top}\|_2$ via implicit power iteration, includes a grouped query attention formulation that avoids key expansion, and remains compatible with fused attention kernels. Across GPT-2 XL to Llama-2-70B, geometry-aware scaling eliminates overflows in transient scenarios where delayed scaling fails, while achieving comparable downstream MMLU accuracy.
\end{abstract}

\section{Introduction}
\label{sec:intro}
Attention logits in transformers exhibit a bilinear structure: each score $S_{ij} = x_i^\top W^Q W^{K\top} x_j / \sqrt{d_h}$ depends on the interaction of two projected vectors. This structure determines the maximum magnitude of attention logits, a quantity with direct consequences for low-precision training. FP8 formats reduce precision from 16 to 8 bits, but the E4M3 format spans only $\pm 448$ compared to FP16's $\pm 65{,}504$.\footnote{FP8 training can yield substantial speedups on memory-bound workloads \citep{micikevicius2022fp8}; our focus is on the calibration problem that enables safe deployment.} When attention logits exceed this range, quantization overflows produce NaN values that corrupt training.

\paragraph{Problem formulation.}
Let $R_{\max}$ denote the representable range of the target format ($R_{\max} = 448$ for E4M3). Given a scale factor $s > 0$, quantization overflows when $\max_{i,j}|S_{ij}| > s \cdot R_{\max}$. The calibration problem is to select $s$ that:
\begin{enumerate}[leftmargin=*, itemsep=1pt, topsep=2pt]
\item \textbf{(Safety)} Ensures $\Pr\bigl(\max_{i,j}|S_{ij}| > s \cdot R_{\max}\bigr) \leq \delta^*$ for a target failure probability $\delta^*$;
\item \textbf{(Utilization)} Maximizes precision utilization $\mathbb{E}[\max_{i,j}|S_{ij}|] / (s \cdot R_{\max})$, minimizing quantization error.
\end{enumerate}
These objectives trade off: smaller $s$ improves utilization but increases overflow risk. The challenge is to calibrate $s$ \emph{without observing} $\max_{i,j}|S_{ij}|$, which would require materializing the full attention matrix.

Existing methods derive scale factors from \emph{observed} activation statistics, either from a history buffer (delayed scaling) or per-iteration (current scaling), without analyzing the bilinear mechanism that governs logit magnitudes.

\paragraph{Delayed scaling and its limitations.}
The standard approach for FP8 training is \emph{delayed scaling} \citep{micikevicius2022fp8}, which maintains a history buffer of activation maxima (typically 16 steps) and derives scale factors from past observations:
\begin{equation}
\text{scale}_t = \frac{\max(\text{history})}{448 \cdot \eta},
\label{eq:delayed}
\end{equation}
where $\eta < 1$ provides a safety margin. Scale factors are computed \emph{before} the current forward pass, avoiding the need to observe current activations.

However, delayed scaling assumes activation magnitudes change slowly. This assumption breaks whenever weight dynamics outpace the history buffer. We call this failure mode \emph{history staleness}. Common triggers include: (i) loading pretrained checkpoints, where FP8 history initializes to defaults while weights reflect extensive prior training; (ii) resuming from saved states, since standard checkpointing omits scaling state; and (iii) learning rate transitions, where warmup or cyclic schedules cause weights to shift faster than history adapts. An alternative is \emph{current scaling}, which computes scale factors from $\max|S_t|$ in each forward pass. This eliminates staleness but requires materializing the full $L \times L$ attention score matrix, incompatible with fused attention kernels like FlashAttention \citep{dao2022flashattention,dao2023flashattention2} that achieve $O(L)$ memory precisely by avoiding this materialization.

Table~\ref{tab:design-space} summarizes this dilemma: delayed scaling is compatible with fused kernels but fails during transients; current scaling handles transients but requires materializing the score matrix. Prior to this work, no method achieved both.

\begin{table}[h]
\centering
\caption{The FP8 scaling dilemma.}
\label{tab:design-space}
\footnotesize
\begin{tabular}{@{}lcc@{}}
\toprule
\textbf{Method} & \textbf{Transient-Safe} & \textbf{Fused-Compat.} \\
\midrule
Delayed & \ding{55} & \ding{51} \\
Current & \ding{51} & \ding{55} \\
\textbf{Ours} & \ding{51} & \ding{51} \\
\bottomrule
\end{tabular}
\vspace{-0.4cm}
\end{table}

\paragraph{Our approach: predictive calibration from weight geometry.}
We propose a fundamentally different paradigm. Instead of \emph{reacting} to observed activations (whether past or present), we \emph{predict} their bounds from the singular value structure of the query-key interaction matrix. Attention scores form a bilinear structure:
\begin{equation}
S_{ij} = \frac{x_i^\top M x_j}{\sqrt{d_h}}, \quad \text{where } M = W^Q W^{K\top} \in \mathbb{R}^{d \times d}
\label{eq:bilinear}
\end{equation}
is the query-key \emph{interaction matrix} and $x_i, x_j$ are input embeddings. For \emph{pre-LN architectures} (including GPT-2, Llama, and Mistral, which place normalization before attention), LayerNorm (or RMSNorm \citep{zhang2019root}) constrains $\|x_i\| \approx \sqrt{d}$, yielding:
\begin{equation}
\max_{i,j} |S_{ij}| \leq \|M\|_2 \cdot \frac{d}{\sqrt{d_h}} =: B_{\max}.
\label{eq:worst-case}
\end{equation}
This bound depends only on current weights, not activations, and can be estimated via power iteration \citep{golub2013matrix} without forming the $d \times d$ matrix $M$. When weights change, the bound updates immediately, enabling \emph{predictive} calibration that eliminates the failure modes of delayed scaling during transients such as checkpoint loading, resumption, and learning rate changes. Since we never observe activations, compatibility with fused attention kernels is preserved.

The worst-case bound $B_{\max}$ assumes inputs align perfectly with top singular vectors, which becomes exponentially unlikely as dimension increases. We therefore introduce a calibration factor $\alpha \in (0, 1)$ to obtain a tighter bound $B_\alpha = \alpha \cdot B_{\max}$. To quantify the associated risk, we derive a \emph{rank-aware concentration inequality} on $\Pr(\max_{i,j}|S_{ij}| \geq B_\alpha)$ that exploits the low-rank structure of the interaction matrix ($\mathrm{rank}(M) = d_h \ll d$). The key insight is that although $M$ acts on $d$-dimensional space, its range is only $d_h$-dimensional; inputs are exponentially unlikely to find this low-dimensional subspace of maximum amplification. The resulting tail exponent improves from $d\alpha^2$ to $d^2\alpha^2/(\gamma d_h)$, a factor of $d/(\gamma d_h) = 8$--$28\times$ in modern architectures (where $\gamma > 1$ is a parameter controlling the typicality threshold; see Section~\ref{sec:probabilistic}). This analysis provides a principled selection rule: given a target overflow probability $\delta^*$, we derive the minimum $\alpha$ guaranteeing $\Pr(\max_{i,j}|S_{ij}| \geq B_\alpha) \leq \delta^*$.

\paragraph{Contributions.}
\begin{enumerate}
\item \textbf{Tight spectral interaction bound.}
We prove $\max_{i,j}|S_{ij}| \leq \|W^QW^{K\top}\|_2 \cdot d/\sqrt{d_h}$ under the distributional constraints induced by LayerNorm or RMSNorm. This bound is never looser than the naive submultiplicative bound $\|W^Q\|_2\|W^K\|_2$ and is strictly tighter unless the top right singular vectors align (Corollary~\ref{cor:tighter}). We show extension to RoPE architectures, with the tighter interaction bound validated empirically (Corollary~\ref{cor:rope}).

\item \textbf{Rank-aware concentration inequality.}
We prove a concentration inequality for bilinear forms $x_i^\top M x_j$ when $M$ has rank $r \ll d$ and inputs are approximately uniform on the sphere (Proposition~\ref{prop:overflow}). The key insight is a two-stage conditioning argument: first bound the probability that projections onto $M$'s row space are atypical, then apply concentration with a tighter Lipschitz constant. This yields tail exponents of order $d^2\alpha^2/(\gamma r)$ versus $d\alpha^2$ for rank-agnostic bounds, an improvement of $d/(\gamma r)$ that becomes $8$--$28\times$ in modern transformers where $r = d_h$. Applied to attention, this provides a principled selection rule: given a target overflow probability $\delta^*$, we derive the minimum calibration factor $\alpha$ guaranteeing safety.

\item \textbf{Implicit spectral norm estimator.} We estimate $\|W^QW^{K\top}\|_2$ via power iteration without forming the $d \times d$ interaction matrix, achieving $O(n_{\text{heads}} \cdot d_h \cdot d)$ cost per layer. For grouped query attention, we derive an implicit formulation that avoids key matrix expansion (Section~\ref{sec:implicit-gqa}).

\item \textbf{Empirical validation.} Zero overflows across GPT-2~XL through Llama-2-70B on all transient scenarios where delayed scaling fails. Downstream evaluation on MMLU confirms preserved capability: our auto-$\alpha$ calibration achieves accuracy comparable to delayed scaling (33.6\% vs.\ 33.0\%) while eliminating all overflows, whereas conservative spectral bounds degrade to 28.1\% due to under-utilization (Section~\ref{sec:downstream}).
\end{enumerate}

\paragraph{Positioning.}
The spectral bound in Equation~\eqref{eq:worst-case} builds on classical matrix analysis; our contribution lies in developing a complete calibration framework for low-precision training. This requires solving three problems that the bound alone does not address: (i) principled probabilistic calibration under the rank-deficient structure of attention, (ii) efficient per-layer computation via implicit power iteration, and (iii) an implicit GQA formulation that avoids key expansion. Together, these enable the first calibration method that is both robust to transients and compatible with fused attention kernels.

\vspace{-0.2cm}
\section{Related Work}
\label{sec:related}
\vspace{-0.2cm}

\paragraph{FP8 training and scaling strategies.}
\citet{micikevicius2022fp8} introduced FP8 formats and delayed (history-based) scaling, now standard in low-precision training frameworks. DeepSeek-V3 \citep{deepseekai2024} employs per-tile scaling (e.g., $1 \times 128$ blocks) but retains attention in BF16/FP32. Microscaling \citep{rouhani2023microscaling} similarly uses block-wise factors. These approaches address \emph{spatial heterogeneity} within tensors but still derive scales from observed values, inheriting history staleness during transients. FlashAttention-3 \citep{shah2024flashattention3} introduces native FP8 support with internal block quantization, but requires inputs cast to FP8 \emph{before} kernel entry. Incorrect global scale factors cause overflow before internal mechanisms can compensate. Other recent work addresses orthogonal aspects: architectural modifications \citep{hernandezcano2025fog}, inference fallback \citep{lee2025nestedfp}, and LoRA fine-tuning \citep{choi2025falqon}. Our method provides a \emph{safety envelope}: geometry-aware bounds ensure inputs are within safe range, and are orthogonal to block-wise optimizations.
\vspace{-0.3cm}
\paragraph{Predictive scaling.}
Unit Scaling \citep{blake2023unit} eliminates scaling factors via careful initialization that maintains unit variance throughout the network, but requires training from scratch and cannot be applied to pretrained checkpoints. MOSS \citep{zhang2025moss} exploits the bounded update property of AdamW ($|\Delta W_t| \leq \eta$) to predict \emph{weight} magnitudes without runtime max-reduction. However, MOSS requires the MXFP8 microscaling format with hardware-specific support. More fundamentally, bounds derived from individual weight norms $\|W^Q\|$ and $\|W^K\|$ are too loose for effective FP8 calibration: the naive bound $\|W^Q\|_2 \|W^K\|_2$ can substantially overestimate when singular vectors of $W^Q$ and $W^K$ do not align, wasting dynamic range (Corollary~\ref{cor:tighter}). Our method addresses attention-specific overflow using only the standard E4M3 format, requires no custom kernels, and is compatible with NVIDIA's Transformer Engine.
\vspace{-0.3cm}
\paragraph{Spectral methods for transformer stability.}
\citet{zhai2023stabilizing} show that \emph{entropy collapse} (attention concentrating on single tokens) is governed by the spectral norm of the query-key interaction matrix. Our work reveals the same quantity governs numerical overflow risk, connecting optimization stability and numerical stability through shared geometric structure. \citet{miyato2018spectral} introduced spectral normalization for GANs as a regularizer; we use spectral norms for \emph{prediction} rather than constraint.
\vspace{-0.2cm}
\section{Theoretical Foundation}
\label{sec:theory}
\vspace{-0.15cm}

This section develops the theoretical basis for geometry-aware scaling.

\subsection{Spectral Bound on Attention Logits}
\label{sec:spectral-bound}

Pre-softmax attention scores are computed as $S = QK^\top / \sqrt{d_h}$, where $Q = XW^Q$ and $K = XW^K$. Each score can be written as:
\begin{equation}
S_{ij} = \frac{x_i^\top M x_j}{\sqrt{d_h}}, \quad \text{where } M = W^Q W^{K\top} \in \R^{d \times d}.
\label{eq:bilinear_theory}
\end{equation}
The matrix $M$ is the \emph{query-key interaction matrix}. Unlike FFN layers (additive structure) or embeddings (element-wise), attention's \emph{bilinear} structure compounds magnitudes: large queries meeting large keys produce scores that can exceed the E4M3 threshold of 448. This makes attention the primary overflow bottleneck in FP8 training.

To bound attention logits, we exploit this bilinear structure. Proofs for this section are in Appendix~\ref{app:proofs-spectral}. A natural first approach tracks $\|W^Q\|_2$ and $\|W^K\|_2$ separately:
\begin{proposition}[Naive bound]
\label{prop:naive}
Let $\|x_i\|_2 \leq B_X$ for all tokens. Then:
\begin{equation}
\max_{i,j} |S_{ij}| \leq \frac{\|W^Q\|_2 \|W^K\|_2 \cdot B_X^2}{\sqrt{d_h}}.
\end{equation}
\end{proposition}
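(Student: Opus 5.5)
The plan is to write each score as an inner product of projected vectors and then apply Cauchy--Schwarz followed by the operator-norm inequality. With $q_i = W^{Q\top} x_i$ and $k_j = W^{K\top} x_j$ (in the row-vector convention $Q = XW^Q$, $K = XW^K$, so row $i$ of $Q$ is $x_i^\top W^Q$), we have
\begin{equation*}
S_{ij} = \frac{x_i^\top W^Q W^{K\top} x_j}{\sqrt{d_h}} = \frac{\langle q_i, k_j\rangle}{\sqrt{d_h}},
\end{equation*}
which agrees with Equation~\eqref{eq:bilinear_theory}.

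The argument proceeds in three steps.

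\textbf{Cauchy--Schwarz.} In $\R^{d_h}$ this gives $|\langle q_i,k_j\rangle| \le \|q_i\|_2\,\|k_j\|_2$.

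\textbf{Operator-norm bound.} By the definition of the spectral norm, and since $\|A^\top\|_2 = \|A\|_2$,
\begin{equation*}
\|q_i\|_2 = \|W^{Q\top} x_i\|_2 \le \|W^Q\|_2\,\|x_i\|_2 \le \|W^Q\|_2\, B_X,
\end{equation*}
and in the same way $\|k_j\|_2 \le \|W^K\|_2\, B_X$.

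\textbf{Combine.} Multiplying the two bounds, dividing by $\sqrt{d_h}$, and taking the maximum over $i,j$ gives the stated inequality. The bound on the right is uniform in $i$ and $j$, so passing to the maximum costs nothing.

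An equivalent route is to apply submultiplicativity directly to $M$. This gives $|x_i^\top M x_j| \le \|M\|_2\,\|x_i\|_2\,\|x_j\|_2$ together with $\|M\|_2 \le \|W^Q\|_2\,\|W^K\|_2$, and this form also sets up the tighter bound in Equation~\eqref{eq:worst-case}.

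There is no substantive obstacle here. The only care needed is the transpose convention, so that the product of $W^Q$ and $W^{K\top}$ correctly reproduces $QK^\top$, and the fact that the spectral norm is invariant under transposition.
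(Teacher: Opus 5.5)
Your proof is correct and follows the paper's argument: Cauchy--Schwarz on $\langle q_i, k_j\rangle$, then the operator-norm bound $\|W^{Q\top}x_i\|_2 \le \|W^Q\|_2 B_X$ and its analogue for the keys. The alternative route through $M$ that you mention is how the paper handles Proposition~\ref{prop:interaction} together with Corollary~\ref{cor:tighter}.
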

This bound is loose because it treats $W^Q$ and $W^K$ independently. Analyzing $M = W^Q W^{K\top}$ directly yields a tighter result:
\begin{proposition}[Interaction bound]
\label{prop:interaction}
Under the same conditions:
\begin{equation}
\max_{i,j} |S_{ij}| \leq \frac{\|W^Q W^{K\top}\|_2 \cdot B_X^2}{\sqrt{d_h}}.
\end{equation}
\end{proposition}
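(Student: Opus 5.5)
The argument is a direct operator-norm estimate applied to the bilinear form in Equation~\eqref{eq:bilinear_theory}. The idea is to keep the product $W^Q W^{K\top}$ together as the single matrix $M$ rather than splitting it as in Proposition~\ref{prop:naive}.

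Fix an arbitrary pair of tokens $(i,j)$. Write $S_{ij} = x_i^\top M x_j / \sqrt{d_h}$ with $M = W^Q W^{K\top}$. This identity follows from $Q = XW^Q$ and $K = XW^K$: the $(i,j)$ entry of $QK^\top$ is $(x_i^\top W^Q)(W^{K\top} x_j)$.

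The estimate then proceeds in two steps.
\begin{itemize}
\item Apply Cauchy--Schwarz to the inner product $\langle x_i, M x_j\rangle$, which gives $|x_i^\top M x_j| \le \|x_i\|_2 \,\|M x_j\|_2$.
\item Use the definition of the spectral norm as the induced $\ell_2$ operator norm, which gives $\|M x_j\|_2 \le \|M\|_2 \|x_j\|_2$.
\end{itemize}
Combining these with the hypothesis $\|x_i\|_2, \|x_j\|_2 \le B_X$ yields
\[
|S_{ij}| \le \frac{\|W^Q W^{K\top}\|_2 B_X^2}{\sqrt{d_h}}.
\]
Since the right-hand side does not depend on $(i,j)$, taking the maximum over all pairs finishes the proof.

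There is no real obstacle here; the only care needed is bookkeeping. The score must be expressed through $M$ rather than through $W^Q$ and $W^K$ separately. Also, $W^{K\top}$ should be treated consistently with the shape conventions $W^Q, W^K \in \R^{d\times d_h}$, so that $M \in \R^{d\times d}$ and the operator norm is the one on $\R^d$.

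It is also worth noting why this bound is at least as tight as Proposition~\ref{prop:naive}: submultiplicativity gives $\|W^Q W^{K\top}\|_2 \le \|W^Q\|_2\|W^{K\top}\|_2 = \|W^Q\|_2\|W^K\|_2$. This anticipates Corollary~\ref{cor:tighter}, but it is not needed for the statement itself.
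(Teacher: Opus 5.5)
Your proof is correct and takes essentially the same approach as the paper. The paper invokes the variational characterization $\|M\|_2 = \max_{\|u\|_2=\|v\|_2=1}|u^\top M v|$ to get $|x_i^\top M x_j| \le \|M\|_2\|x_i\|_2\|x_j\|_2 \le \|M\|_2 B_X^2$; your Cauchy--Schwarz step followed by the induced-norm bound proves that same inequality directly.
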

\begin{corollary}[Interaction bound is tighter]
\label{cor:tighter}
$\|W^Q W^{K\top}\|_2 \leq \|W^Q\|_2 \|W^K\|_2$, with equality iff the top right singular vectors of $W^Q$ and $W^K$ coincide.
\end{corollary}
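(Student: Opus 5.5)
The inequality follows from submultiplicativity of the operator norm. For the equality case I would characterize exactly when a unit vector can attain the supremum defining $\|W^Q W^{K\top}\|_2$. Throughout, $W^Q, W^K \in \R^{d\times d_h}$, so the right singular vectors of both lie in $\R^{d_h}$. This is the space through which $M = W^Q W^{K\top}$ factors. I would read ``the top right singular vectors coincide'' as ``the top right singular subspaces of $W^Q$ and $W^K$ share a common unit vector''. This reading is needed because top singular vectors need not be unique. If either matrix is zero, both sides vanish, so I assume both are nonzero.

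\emph{Inequality.} For any unit $u \in \R^d$ we have
\[
\|W^Q W^{K\top} u\|_2 \le \|W^Q\|_2 \, \|W^{K\top} u\|_2 \le \|W^Q\|_2 \, \|W^{K\top}\|_2 .
\]
Since $\|W^{K\top}\|_2 = \|W^K\|_2$, taking the supremum over $u$ gives the bound.

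\emph{Equality, ``only if''.} The unit sphere is compact, so some unit $u^\star$ attains $\|W^Q W^{K\top} u^\star\|_2 = \|W^Q W^{K\top}\|_2$. If the two sides are equal, both inequalities in the chain above are tight at $u^\star$.

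Tightness of the second inequality means $\|W^{K\top}u^\star\|_2 = \sigma_1(W^K)$. So $u^\star$ lies in the top left singular subspace of $W^K$. Then $w := W^{K\top}u^\star = \sigma_1(W^K)\, v$ for some unit $v$ in the top right singular subspace of $W^K$. This follows from the SVD: $W^{K\top}$ maps the top left singular subspace isometrically, up to the factor $\sigma_1$, onto the top right singular subspace.

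Tightness of the first inequality means $\|W^Q v\|_2 = \sigma_1(W^Q)$, with $v = w/\|w\|_2$. Hence $v$ lies in the top right singular subspace of $W^Q$, which I would verify by expanding $v$ in the right singular basis of $W^Q$. Therefore $v$ is a common top right singular vector of $W^Q$ and $W^K$.

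\emph{Equality, ``if''.} Suppose $v$ is a common unit top right singular vector. Let $u$ be the corresponding left singular vector of $W^K$, i.e.\ $W^K v = \sigma_1(W^K) u$, so that $W^{K\top} u = \sigma_1(W^K) v$. Then
\[
\|W^Q W^{K\top} u\|_2 = \sigma_1(W^K)\,\|W^Q v\|_2 = \sigma_1(W^K)\,\sigma_1(W^Q),
\]
which forces equality.

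The only real obstacle is bookkeeping. One must track which singular vectors live in $\R^{d_h}$ and which in $\R^d$. One must also handle non-unique top singular values, which is why I phrase the condition through subspaces rather than individual vectors. The rest is direct SVD manipulation.
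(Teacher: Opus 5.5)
Your proof is correct, and for the equality case it takes a different route from the paper's. The paper writes $W^Q W^{K\top} = U_Q \Sigma_Q V_Q^\top V_K \Sigma_K U_K^\top$ and then simply asserts that the norm equals $\sigma_1(W^Q)\sigma_1(W^K)$ iff $|(V_Q^\top V_K)_{11}| = 1$, i.e.\ $v_1^Q = \pm v_1^K$. You instead do the following:
\begin{itemize}
\item take a maximizer $u^\star$;
\item force both links of the submultiplicative chain to be tight at $u^\star$;
\item use the Rayleigh-quotient fact that $\|W^{K\top}u\|_2 = \sigma_1(W^K)$ (resp.\ $\|W^Q v\|_2 = \sigma_1(W^Q)$) holds only on the top left (resp.\ right) singular subspace.
\end{itemize}
Your route actually proves the ``iff'' rather than asserting it. It is also more general. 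When a top singular value is repeated, the paper's $(1,1)$-entry test depends on an arbitrary choice of basis inside the top singular subspace. For example, suppose $W^Q$ has a double top singular value with right singular vectors $e_1, e_2$, and $W^K$ has simple top right singular vector $e_2$. Then equality holds, yet an SVD with $v_1^Q = e_1$ gives $(V_Q^\top V_K)_{11} = 0$. Your subspace-intersection condition is basis-free and correct in all cases. This is exactly why your reading of ``coincide'' is the right one. What the paper's formulation buys is a compact, concrete test in the generic case of simple top singular values. Your quick dismissal of the zero-matrix case is also consistent with your reading: the top right singular subspace of a zero matrix is all of $\R^{d_h}$, so the condition holds trivially there.
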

\noindent For pre-LN architectures (GPT-2, Llama, Mistral), LayerNorm or RMSNorm \citep{zhang2019root} precedes attention, constraining $\|x_i\|_2^2 \approx d$ and giving $B_X = \sqrt{d}$. Substituting yields the \emph{worst-case bound}:
\begin{equation}
\max_{i,j} |S_{ij}| \leq \|W^Q W^{K\top}\|_2 \cdot \frac{d}{\sqrt{d_h}} =: B_{\max}.
\label{eq:bound}
\end{equation}

\paragraph{Calibrating the bound.}
The bound $B_{\max}$ assumes input vectors align perfectly with the top singular vectors of $M$. In practice, this is overly pessimistic: the worst case requires inputs to find the direction of maximum amplification in a $d$-dimensional space, but LayerNorm-normalized inputs behave like random directions on the sphere. In high dimensions, random vectors are nearly orthogonal to any fixed direction, making such alignment exponentially unlikely.

This concentration phenomenon motivates introducing a calibration factor $\alpha \in (0, 1)$. The \emph{calibrated bound} is:
\begin{equation}
B_\alpha := \alpha \cdot B_{\max} = \alpha \cdot \|M\|_2 \cdot \frac{d}{\sqrt{d_h}}.
\label{eq:calibrated_bound}
\end{equation}
Smaller $\alpha$ yields a tighter bound and higher FP8 dynamic range utilization, but increases the probability that actual logits exceed the bound. The natural question is how to select $\alpha$: small enough to be practical, yet large enough to guarantee safety. In the next section, we develop a probabilistic framework that provides a principled answer.


\subsection{Rank-Aware Probabilistic Guarantee}
\label{sec:probabilistic}

We derive a probabilistic bound on $\Pr(\max_{i,j}|S_{ij}| \geq B_\alpha)$ that enables principled selection of the calibration factor $\alpha$.
Proofs for this section are in Appendix~\ref{app:proofs-probabilistic}.

\paragraph{Assumption (Spherical token directions).}
For Pre-LN transformers, we model post-LayerNorm (or post-RMSNorm) token vectors as
$x_i = \sqrt{d}\,u_i$, where directions $u_i$ are approximately isotropic
(idealized as i.i.d.\ uniform on $\mathbb{S}^{d-1}$).
For self-attention, where queries and keys derive from the same tokens,
we treat each role's projection direction as an independent draw.
This assumption captures the effect of normalization in high dimensions and
is the \emph{only} distributional assumption required for Proposition~\ref{prop:overflow}.

\paragraph{Remark (Diagonal terms).} For diagonal terms ($i = j$), the attention score involves the quadratic form $u_i^\top M u_i$ rather than a bilinear form with independent arguments. Our analysis treats these as bilinear by assuming independence. This is conservative for two reasons: (i) diagonal terms constitute only $L$ of $L^2$ query-key pairs, a vanishing fraction as $L$ grows; and (ii) quadratic forms $u^\top M u$ for $u$ uniform on the sphere exhibit \emph{tighter} concentration than bilinear forms $u^\top M v$ with independent $u, v$ (the variance is smaller when both vectors are identical). Thus, the independence assumption yields an upper bound on overflow probability.

Under this model, and exploiting the low-rank structure of the query--key
interaction matrix $\mathrm{rank}(M)\le d_h\ll d$, we obtain a tail bound whose
concentration exponent improves by a factor of $d/(\gamma d_h)$ over
rank-agnostic arguments. Table~\ref{tab:concentration} quantifies this improvement for representative architectures
(see Appendix~\ref{app:rank_agnostic_comparison} for derivation details).

\begin{table}[h]
\centering
\caption{Concentration exponent improvement from rank-aware bounds: $d/(\gamma d_h)$ tighter tail probabilities, where $\gamma$ is determined by Eq.~\eqref{eq:gamma_selection}.}
\label{tab:concentration}
\footnotesize
\begin{tabular}{@{}lcccc@{}}
\toprule
\textbf{Model} & $d$ & $d_h$ & $\gamma$ & \textbf{Improvement} \\
\midrule
GPT-2 XL & 1600 & 64 & 2.98 & $8\times$ \\
Mistral-7B & 4096 & 128 & 2.26 & $14\times$ \\
Llama-2-13B & 5120 & 128 & 2.28 & $18\times$ \\
Llama-2-70B & 8192 & 128 & 2.32 & $28\times$ \\
\bottomrule
\end{tabular}
\vspace{-0.3cm}
\end{table}

Our goal is a safety guarantee covering all tokens, heads, and layers.
Accordingly, the analysis applies union bounds over $L^2$ query--key pairs and
over all attention heads.
This yields a conservative bound that remains robust under transient conditions (e.g., checkpoint loading and learning-rate discontinuities), while leaving slack in typical steady-state regimes.

\begin{proposition}[Rank-aware overflow probability bound]
\label{prop:overflow}
Let $M = W^Q W^{K\top}$ with $W^Q, W^K \in \mathbb{R}^{d \times d_h}$ for a single attention head and $\mathrm{rank}(M) = d_h$. For a sequence of length $L$ and any $\gamma > 1$,
\begin{equation}
\Pr\bigl(\max_{i,j}|S_{ij}| \geq B_\alpha \bigr)
\;\leq\;
T_1 + T_2,
\label{eq:delta_seq}
\vspace{-0.6cm}
\end{equation}
where
\vspace{-0.2cm}
\begin{align}
T_1 &= L \exp\!\left(-\frac{d_h}{2}(\gamma - 1 - \ln \gamma)\right), \label{eq:T1}\\
T_2 &= 2L^2 \exp\!\left(-\frac{d^2 \alpha^2}{2 \gamma d_h}\right). \label{eq:T2}
\end{align}
The term $T_1$ bounds the probability that any key projection is atypically large, while $T_2$ bounds overflow conditioned on typical key projections. The parameter $\gamma > 1$ controls the typicality threshold.
\end{proposition}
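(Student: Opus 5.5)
Throughout I work in the normalized coordinates of the spherical-direction assumption, $x_i=\sqrt{d}\,u_i$ with $u_i$ i.i.d.\ uniform on $\mathbb{S}^{d-1}$, and I treat the query role and key role as independent draws, as in the Remark on diagonal terms. Then $S_{ij}=d\,u_i^\top M u_j/\sqrt{d_h}$ and, by \eqref{eq:calibrated_bound}, the event $|S_{ij}|\ge B_\alpha$ is exactly $|u_i^\top M u_j|\ge \alpha\|M\|_2$. The norm $\|x_i\|^2=d$ is the same normalization used to define $B_{\max}$ in \eqref{eq:bound}.

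\textbf{Step 1: reduce to the row space.} Let $P$ be the orthogonal projector onto the row space of $M$, which has dimension $r=d_h$. Then $Mu_j=MPu_j$, so
\begin{equation*}
\|Mu_j\|_2\le \|M\|_2\,\|Pu_j\|_2 .
\end{equation*}
For $u_j$ uniform on the sphere, $\|Pu_j\|_2^2$ has the $\mathrm{Beta}(d_h/2,(d-d_h)/2)$ law, with mean $d_h/d$.

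\textbf{Step 2: typicality of keys (the $T_1$ term).} Define the good event
\begin{equation*}
G=\bigl\{\|Pu_j\|_2^2\le \gamma d_h/d \ \text{for all } j\bigr\}.
\end{equation*}
I would invoke the Dasgupta--Gupta random-projection bound (the JL lemma for uniform sphere vectors). For $\gamma>1$ it states
\begin{equation*}
\Pr\bigl(\|Pu\|^2\ge \gamma d_h/d\bigr)\le \exp\bigl(-\tfrac{d_h}{2}(\gamma-1-\ln\gamma)\bigr).
\end{equation*}
If it must be proved from scratch, the route is as follows. Write $u=g/\|g\|$ with $g$ standard Gaussian. Apply a Chernoff/moment-generating-function argument to $(1-\beta)\|Pg\|^2-\beta\|(I-P)g\|^2$ with $\beta=\gamma d_h/d$, and optimize the exponent. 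This is the step where care with the sphere-versus-Gaussian constants is needed, and I expect it to be the main technical obstacle. A union bound over the $L$ keys then gives $\Pr(G^c)\le T_1$.

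\textbf{Step 3: conditional concentration (the $T_2$ term).} Fix $j$ and condition on $u_j$ with $u_j$ typical. Set $a=Mu_j$; by Steps 1--2,
\begin{equation*}
\|a\|_2\le \|M\|_2\sqrt{\gamma d_h/d}.
\end{equation*}
Since $u_i$ is independent of $u_j$, the map $u\mapsto u^\top a$ is $\|a\|$-Lipschitz on the sphere. I use the spherical-cap bound $\Pr(u^\top e\ge t)\le e^{-dt^2/2}$ for a unit vector $e$; this is the sharp form of L\'evy's inequality for linear functionals, and only this form is needed. It gives
\begin{equation*}
\Pr\bigl(|u_i^\top a|\ge \alpha\|M\|_2\bigr)\le 2\exp\!\Bigl(-\frac{d\,\alpha^2\|M\|_2^2}{2\|a\|_2^2}\Bigr)\le 2\exp\!\Bigl(-\frac{d^2\alpha^2}{2\gamma d_h}\Bigr).
\end{equation*}
This is the point where the rank enters: the Lipschitz constant shrinks from $\|M\|_2$ to $\|M\|_2\sqrt{\gamma d_h/d}$.

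\textbf{Step 4: assemble.} Split the probability as
\begin{equation*}
\Pr\bigl(\max|S_{ij}|\ge B_\alpha\bigr)\le \Pr(G^c)+\Pr\bigl(\exists\,i,j:\ |u_i^\top Mu_j|\ge \alpha\|M\|_2,\ G\bigr).
\end{equation*}
For the second probability, union-bound over the $L^2$ pairs. For each pair, bound the probability of the joint event by integrating the conditional bound of Step 3 over the typical set of $u_j$. This yields at most $T_2$. Adding $\Pr(G^c)\le T_1$ gives \eqref{eq:delta_seq}.
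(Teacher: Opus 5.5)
Your proposal is correct and follows the paper's proof essentially step for step. Both arguments use the same typical-key event $\|V^\top u_j\|_2^2 \le \gamma d_h/d$ (your projector $P$ is the paper's $VV^\top$). Both bound it by the Beta/Chernoff tail, and your Dasgupta--Gupta bound is exactly the paper's Lemma~\ref{lem:chisq-tail}. Both then apply concentration for the linear functional with the reduced Lipschitz constant $\|M\|_2\sqrt{\gamma d_h/d}$, followed by union bounds over the $L$ keys and the $L^2$ pairs.

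The only differences are slightly cleaner justifications of the same steps. You invoke the sharp spherical-cap bound for linear functionals rather than a generic L\'evy lemma. You also bound $\Pr(A\cap G)$ by integrating over $u_j$ instead of conditioning on the global event $\mathcal{E}$.
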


\vspace{-0.4cm}
\paragraph{Extension to full transformers.}
Proposition~\ref{prop:overflow} bounds overflow probability for a single attention head. For a transformer with
$N = n_{\mathrm{layers}} \times n_{\mathrm{heads}}$ heads, overflow occurs if \emph{any} head overflows. Applying a union bound,
\[
\Pr(\text{overflow in any head}) \leq N (T_1 + T_2).
\]
To ensure this probability is at most $\delta^*$, it suffices to require
$N \cdot T_1 \leq \delta^*/2$ and $N \cdot T_2 \leq \delta^*/2$.\footnote{The equal allocation is a standard simplification; optimizing the split yields negligible improvement in practice.}
Solving these inequalities yields a principled selection rule for $\gamma$ and $\alpha$.

\paragraph{Principled $\alpha$ selection.}
\textbf{Step 1: Select $\gamma$.}
From $N \cdot T_1 \leq \delta^*/2$ and \eqref{eq:T1}, we require
\begin{equation}
h(\gamma) := \gamma - 1 - \ln \gamma
\;\geq\;
\frac{2}{d_h} \ln\!\left(\frac{2 N L}{\delta^*}\right).
\label{eq:gamma_selection}
\end{equation}

\textbf{Step 2: Select $\alpha$.}
From $N \cdot T_2 \leq \delta^*/2$ and \eqref{eq:T2}, we require
\begin{equation}
\alpha \;\geq\;
\alpha_{\min}
:=
\frac{\sqrt{2 \gamma d_h}}{d}
\sqrt{\ln\!\left(\frac{4 N L^2}{\delta^*}\right)}.
\label{eq:alpha_selection}
\end{equation}

Table~\ref{tab:alpha_selection} reports $\alpha_{\min}$ for our evaluated models with $\delta^* = 10^{-6}$ and $L = 1024$.

\begin{table}[h]
\centering
\caption{Minimum calibration factor $\alpha_{\min}$ for $\delta^* = 10^{-6}$ and $L = 1024$.}
\label{tab:alpha_selection}
\footnotesize
\begin{tabular}{@{}lcccc@{}}
\toprule
\textbf{Model} & $d$ & $d_h$ & $N$ & $\alpha_{\min}$ \\
\midrule
GPT-2 XL        & 1600 & 64  & 1200 & 0.074 \\
Mistral-7B     & 4096 & 128 & 1024 & 0.035 \\
Llama-2-13B    & 5120 & 128 & 1600 & 0.028 \\
Llama-2-70B    & 8192 & 128 & 5120 & 0.018 \\
\bottomrule
\end{tabular}
\vspace{-0.4cm}
\end{table}

\paragraph{Selecting $\alpha$ in practice.}
We set $\alpha$ slightly above $\alpha_{\min}$ for safety margin. Larger models permit smaller $\alpha$ due to stronger concentration arising from higher $d/\sqrt{d_h}$ ratios. In our experiments, we use $\alpha = 0.08$ for GPT-2 XL, $\alpha = 0.04$ for Mistral-7B, $\alpha = 0.03$ for Llama-2-13B, and $\alpha = 0.02$ for Llama-2-70B, each exceeding the corresponding $\alpha_{\min}$.

\paragraph{Transient vs.\ steady-state regimes.}
Eq.~\eqref{eq:alpha_selection} yields a conservative $\alpha_{\min}$ designed for transient-prone workflows (pretrained loading, checkpoint resumption without FP8 state, and LR discontinuities) where a single overflow can corrupt training.
For steady-state fine-tuning, we optionally tighten $\alpha$ using a one-time empirical calibration (Sec.~3.5), then freeze it and revert to fully predictive scaling.

Modern architectures (Llama, Mistral) use Rotary Position Embeddings (RoPE), which apply position-dependent rotations to query and key vectors. We show our bounds extend without modification.

\subsection{Extension to Rotary Position Embeddings}
\label{sec:rope}

Rotary Position Embeddings \citep{su2021roformer} apply rotation matrices to query and key vectors before computing attention:
\begin{equation}
S_{mn} = \frac{(R_m q_m)^\top (R_n k_n)}{\sqrt{d_h}} = \frac{q_m^\top R_m^\top R_n k_n}{\sqrt{d_h}},
\end{equation}
where $R_m, R_n \in \R^{d_h \times d_h}$ are position-dependent block-diagonal rotation matrices. The following results, proved in Appendix~\ref{app:proofs-rope}, show these rotations cannot amplify scores beyond our bounds.

\begin{proposition}[RoPE preserves norms]
\label{prop:rope}
Let $R_\theta \in \R^{d_h \times d_h}$ be any RoPE rotation matrix. Then:
\begin{enumerate}[leftmargin=*, itemsep=2pt]
    \item $R_\theta$ is orthogonal: $R_\theta^\top R_\theta = I$.
    \item $\|R_\theta\|_2 = 1$.
    \item For any $q, k \in \R^{d_h}$: $|(R_m q)^\top (R_n k)| \leq \|q\| \cdot \|k\|$.
\end{enumerate}
\end{proposition}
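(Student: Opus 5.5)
The plan is to verify the three claims in the order stated, using the block-diagonal structure of RoPE rotations. By construction (following \citet{su2021roformer}), $R_\theta$ is block-diagonal with $d_h/2$ blocks of the form
\[
\begin{pmatrix} \cos(m\theta_k) & -\sin(m\theta_k) \\ \sin(m\theta_k) & \cos(m\theta_k) \end{pmatrix},
\]
one for each frequency $\theta_k$. Each $2\times 2$ block $B_k$ satisfies $B_k^\top B_k = I_2$ by the identity $\cos^2+\sin^2=1$, since the off-diagonal terms cancel. Because the transpose of a block-diagonal matrix is block-diagonal with transposed blocks, and products of block-diagonal matrices multiply blockwise, we get $R_\theta^\top R_\theta = \mathrm{diag}(B_1^\top B_1,\dots) = I_{d_h}$. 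This proves item 1. If $d_h$ were odd, any unrotated coordinate contributes a $1\times1$ identity block, but in practice $d_h$ is even.

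For item 2, we use the fact that orthogonality gives $\|R_\theta x\|_2^2 = x^\top R_\theta^\top R_\theta x = \|x\|_2^2$ for every $x$. Hence the operator norm $\sup_{\|x\|=1}\|R_\theta x\|$ equals exactly $1$. Equivalently, all singular values of $R_\theta$ equal $1$.

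For item 3, rewrite $(R_m q)^\top (R_n k)$ as an inner product of two vectors and apply Cauchy--Schwarz to get $|(R_m q)^\top(R_n k)| \le \|R_m q\|\,\|R_n k\|$. Item 2 (norm preservation) then reduces this to $\|q\|\,\|k\|$. Alternatively, one can note that $R_m^\top R_n$ is itself orthogonal, and is in fact the rotation $R_{n-m}$ because the blocks commute. Then $|q^\top R_m^\top R_n k| \le \|q\|\,\|R_m^\top R_n\|_2\,\|k\| = \|q\|\,\|k\|$.

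There is no substantive obstacle. The only care needed is stating the block structure of $R_\theta$ precisely, including the frequency indexing and the convention for pairing coordinates; this convention does not affect orthogonality.
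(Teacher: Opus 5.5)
Your proof is correct and follows essentially the same route as the paper: the $2\times 2$ rotation blocks are orthogonal, so $R_\theta^\top R_\theta = I$; norm preservation gives $\|R_\theta\|_2 = 1$; and Cauchy--Schwarz together with orthogonality gives item 3. The paper proves item 3 via orthogonality of $R_m^\top R_n$, which is your stated alternative, rather than applying Cauchy--Schwarz to $R_m q$ and $R_n k$ directly, but the two arguments are trivially equivalent.
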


\begin{corollary}[Geometry-aware scaling extends to RoPE]
\label{cor:rope}
For RoPE-based attention, the worst-case bound $|S_{mn}| \leq \|W^Q\|_2 \|W^K\|_2 \cdot d / \sqrt{d_h}$ holds rigorously for all positions via $\|R_m\|_2 = 1$. The tighter interaction bound $\|W^Q W^{K\top}\|_2$ applies when RoPE rotations do not systematically align with the singular subspaces of $W^Q$ and $W^K$, a condition we verified empirically across all layers in Mistral-7B, Llama-2-13B, and Llama-2-70B.
\end{corollary}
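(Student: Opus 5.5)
The plan is to reduce the RoPE score to the bilinear form of Section~\ref{sec:spectral-bound} with a position-dependent interaction matrix, and then handle the two claims of the corollary separately. For token inputs $x_m, x_n$ with $q_m = W^{Q\top} x_m$ and $k_n = W^{K\top} x_n$, we write
\begin{equation*}
S_{mn} = \frac{x_m^\top W^Q R_m^\top R_n W^{K\top} x_n}{\sqrt{d_h}} = \frac{x_m^\top M_{m,n}\, x_n}{\sqrt{d_h}}, \qquad M_{m,n} := W^Q R_m^\top R_n W^{K\top}.
\end{equation*}
Since RoPE rotations are block-diagonal $2\times 2$ rotations by angles proportional to position, they commute and satisfy $R_m^\top R_n = R_{n-m}$. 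So $M_{m,n} = W^Q R_{n-m} W^{K\top}$ depends only on the relative offset.

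\textbf{Rigorous worst-case bound.} We apply part~3 of Proposition~\ref{prop:rope} to get $|S_{mn}| \le \|q_m\|\,\|k_n\|/\sqrt{d_h}$. Then we use $\|q_m\| \le \|W^Q\|_2 \|x_m\|$ and $\|k_n\| \le \|W^K\|_2 \|x_n\|$, noting that $\|W^{\top}\|_2 = \|W\|_2$. Finally we substitute the pre-LN normalization $\|x\|_2 \le B_X = \sqrt{d}$. This gives $|S_{mn}| \le \|W^Q\|_2\|W^K\|_2\, d/\sqrt{d_h}$ uniformly over all positions $m,n$. Equivalently, by submultiplicativity and $\|R_{n-m}\|_2 = 1$ (part~2 of Proposition~\ref{prop:rope}), we have $\|M_{m,n}\|_2 \le \|W^Q\|_2\|W^K\|_2$, and Proposition~\ref{prop:interaction} applied to $M_{m,n}$ yields the same bound. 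This half is routine.

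\textbf{Interaction bound.} Proposition~\ref{prop:interaction} applied to $M_{m,n}$ gives the exact statement
\begin{equation*}
\max_{m,n}|S_{mn}| \le \max_{\Delta} \|W^Q R_\Delta W^{K\top}\|_2 \cdot \frac{d}{\sqrt{d_h}}.
\end{equation*}
The tighter bound $B_{\max}$ of Eq.~\eqref{eq:bound} therefore holds precisely when $\|W^Q R_\Delta W^{K\top}\|_2 \le \|W^Q W^{K\top}\|_2$ for every offset $\Delta$ that occurs within the context length. This is the precise form of the condition that ``rotations do not systematically align with the singular subspaces.''

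To see why this is the right condition, write the thin SVDs $W^Q = U_Q \Sigma_Q V_Q^\top$ and $W^K = U_K \Sigma_K V_K^\top$. Then $\|W^Q R_\Delta W^{K\top}\|_2 = \|\Sigma_Q V_Q^\top R_\Delta V_K \Sigma_K\|_2$. The rotation can only increase this quantity by rotating the right singular directions of $W^K$ toward those of $W^Q$, which is exactly the alignment that Corollary~\ref{cor:tighter} identifies as the source of slack. At $\Delta = 0$ we recover $M$ itself.

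\textbf{Main obstacle.} The hard step is that the inequality $\|W^Q R_\Delta W^{K\top}\|_2 \le \|W^Q W^{K\top}\|_2$ is not true for arbitrary weights: one can construct $W^Q, W^K$ whose top singular vectors are misaligned at $\Delta=0$ but aligned by some rotation. So it cannot be proved from the assumptions stated, and I would not attempt a general proof. Instead the plan is to state it as the verifiable hypothesis above. That hypothesis is checked by computing $\max_\Delta \|W^Q R_\Delta W^{K\top}\|_2$ per head via the same implicit power iteration, over offsets $|\Delta| < L$, and comparing it to $\|W^Q W^{K\top}\|_2$ for the listed models. The rigorous content of the corollary is then the unconditional naive bound together with the conditional interaction bound.
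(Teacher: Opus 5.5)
Your proposal is correct and follows essentially the same route as the paper. The rigorous half goes through submultiplicativity with $\|R_m^\top R_n\|_2 = 1$; your Cauchy--Schwarz variant via part~3 of Proposition~\ref{prop:rope} is equivalent. The interaction half is treated not as a theorem but as the empirically verified condition $\max_{m,n}\|W^Q R_m^\top R_n W^{K\top}\|_2 \le \|W^Q W^{K\top}\|_2$. Your reduction $R_m^\top R_n = R_{n-m}$, and your remark that this condition can fail for adversarially chosen weights, are accurate refinements that sharpen the paper's argument rather than change it.
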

The probabilistic bound (Proposition~\ref{prop:overflow}) also extends conservatively; see Appendix~\ref{app:rope-probabilistic}.
\subsection{Geometry-Aware Scale Factor}
\label{sec:scale-factor}

Spectral norms vary by up to $10\times$ across layers (Appendix~\ref{app:spectral-distribution}), so a global scale would severely underutilize FP8 precision. We therefore compute scale factors per layer.

For layer $\ell$, let $\sigma_{QK}^{(\ell)} = \|W^Q_{(\ell)} W^{K\top}_{(\ell)}\|_2$ denote the spectral norm of the interaction matrix. The calibrated bound for layer $\ell$ is:
\[
B_\alpha^{(\ell)} = \alpha \cdot \sigma_{QK}^{(\ell)} \cdot \frac{d}{\sqrt{d_h}}.
\]

To map this bound to a scale factor, we define $R_{\textup{safe}} = \eta_{\textup{fp8}} \cdot 448$, where $\eta_{\textup{fp8}} < 1$ provides margin below the E4M3 maximum. The geometry-aware scale factor is:
\begin{equation}
\textup{scale}^{(\ell)} = \frac{B_\alpha^{(\ell)}}{R_{\textup{safe}}} = \frac{\alpha \cdot \sigma_{QK}^{(\ell)} \cdot d / \sqrt{d_h}}{\eta_{\textup{fp8}} \cdot 448}.
\label{eq:scale}
\end{equation}
During the forward pass, pre-softmax attention scores $S^{(\ell)} = Q^{(\ell)} K^{(\ell)\top} / \sqrt{d_h}$ are divided by $\textup{scale}^{(\ell)}$ before FP8 quantization, ensuring scaled logits lie within the representable range. In all experiments, we set $\eta_{\textup{fp8}} = 0.8$ and select $\alpha$ per model following Section~\ref{sec:probabilistic}.

For architectures with grouped query attention (GQA), where multiple query heads share key-value heads, we compute $\sigma_{QK}^{(\ell)}$ without expanding the key matrix; this implicit formulation is described in Section~\ref{sec:implicit-gqa}.

\subsection{Auto-$\alpha$ Calibration for Steady-State Training}
\label{sec:autoalpha}

The conservative $\alpha$ from Proposition~\ref{prop:overflow} guarantees overflow probability below $\delta^* = 10^{-6}$, appropriate for transient-prone workflows. However, during steady-state fine-tuning, actual logits occupy a small fraction of the theoretical envelope. Auto-$\alpha$ is an \emph{optional} enhancement for practitioners who prioritize utilization over worst-case robustness.

We define the \emph{slack ratio} at step $t$ as:
\begin{equation}
r_t = \frac{\max_{i,j}|S_{ij}^{(t)}|}{B_{\max}},
\end{equation}
where the numerator is the observed maximum logit.

\paragraph{Algorithm.} Auto-$\alpha$ calibration proceeds in two phases:
\begin{enumerate}
\item \textbf{Burn-in} (steps $1$ to $T_{\text{calib}}$): Run with conservative $\alpha_0$ from Proposition~\ref{prop:overflow}. Collect slack ratios $\{r_t\}$.
\item \textbf{Calibration}: Set $\alpha_{\text{final}} = P_{99.99}(\{r_t\}) \times \kappa$, where $\kappa \geq 1$ is a safety multiplier.
\end{enumerate}
The calibrated $\alpha_{\text{final}}$ is then frozen for all subsequent training. This is distinct from delayed scaling: we measure slack to select a \emph{static} bound, not to continuously adapt. After burn-in, the method is again fully predictive.

\paragraph{Theoretical guarantee.} When using auto-$\alpha$, the probabilistic guarantee from Proposition~\ref{prop:overflow} no longer applies, since the calibration factor is determined empirically rather than from the theoretical selection rule (Eq.~\ref{eq:alpha_selection}). Auto-$\alpha$ trades the worst-case guarantee for higher utilization, relying on the assumption that the burn-in distribution is representative of subsequent training. This is appropriate for steady-state fine-tuning but not for workflows with anticipated distribution shifts.

\paragraph{Memory-efficient attention compatibility.} Auto-$\alpha$ calibration requires observing $\max_{i,j}|S_{ij}|$ during the burn-in phase, which necessitates materializing the score matrix and is incompatible with FlashAttention. For a 100-step burn-in out of thousands of training steps, this overhead is negligible ($<0.1\%$ of total compute). After burn-in, the calibrated $\alpha$ is frozen and the method reverts to fully predictive scaling compatible with memory-efficient attention.

\vspace{-0.2cm}
\section{Efficient Estimation of Spectral Bounds}
\label{sec:implementation}
\vspace{-0.15cm}

This section presents efficient algorithms for computing the spectral bounds.

\subsection{Spectral Norm Estimation via Power Iteration}
\label{sec:power-iteration}

Computing $\|M\|_2$ where $M = W^Q W^{K\top} \in \mathbb{R}^{d \times d}$ via full SVD costs $O(d^3)$, which is prohibitive for large models. For Llama-2-70B with $d = 8192$, this would require forming a $67$ million entry matrix per layer.

We instead use power iteration \citep{golub2013matrix}, a classical algorithm that estimates the top singular value through repeated matrix-vector multiplication. The key insight is that we never form $M$ explicitly. Starting from random unit vectors $u, v \in \mathbb{R}^d$, we alternate:
\begin{equation}
u \leftarrow \frac{Mv}{\|Mv\|}, \quad \text{then} \quad v \leftarrow \frac{M^\top u}{\|M^\top u\|}.
\end{equation}
After these updates, $\|Mv\|$ provides an estimate of $\|M\|_2$. Each matrix-vector product is computed implicitly:
\begin{align}
Mv &= W^Q (W^{K\top} v), \\
M^\top u &= W^K (W^{Q\top} u),
\end{align}
requiring $O(n_{\text{heads}} \cdot d_h \cdot d)$ operations per iteration, where $n_{\text{heads}}$ is the total number of attention heads. This is far cheaper than explicitly forming $M \in \mathbb{R}^{d \times d}$, which would require $O(n_{\text{heads}} \cdot d_h \cdot d^2)$ operations and $O(d^2)$ memory.

We maintain the persistent vectors $u, v$ across training steps. During steady-state training, weights change gradually (controlled by the learning rate), so singular vectors drift slowly and one iteration per forward pass suffices to track them. For cold-start scenarios (initialization, checkpoint loading), we run 5 iterations to ensure convergence from random vectors.

\paragraph{Remark (Transient robustness).} Even during rapid weight changes (e.g., learning rate spikes), any underestimation of the spectral norm is bounded. If the true spectral norm increases by factor $\rho$ from step $t$ to $t+1$, power iteration with warm-start vectors underestimates by at most $\rho$ until the next update. Since our calibration factor $\alpha$ already provides margin above $\alpha_{\min}$ (Section~\ref{sec:probabilistic}), moderate underestimation does not cause overflow. Our experiments with $100\times$ learning rate spikes (Section~\ref{sec:transients}) confirm zero overflows under this regime.

The detailed algorithm is provided in Appendix~\ref{app:algorithms}.

\subsection{Implicit Formulation for Grouped Query Attention}
\label{sec:implicit-gqa}

Modern architectures use Grouped Query Attention (GQA) \citep{ainslie2023gqa}, where multiple query heads share key-value heads. For example, Mistral-7B has $n_q = 32$ query heads but only $n_{kv} = 8$ key-value heads (a 4:1 ratio). This creates a dimension mismatch: $W^Q \in \mathbb{R}^{d \times (n_q \cdot d_h)}$ while $W^K \in \mathbb{R}^{d \times (n_{kv} \cdot d_h)}$.

To compute $\|W^Q W^{K\top}\|_2$ with matching dimensions, a naive approach would expand $W^K$ by replicating each block of $d_h$ columns $g = n_q/n_{kv}$ times to form $W^K_{\text{exp}} \in \mathbb{R}^{d \times (n_q \cdot d_h)}$. This expansion consumes substantial memory: 32MB per layer on Mistral-7B.

We observe that power iteration requires only matrix-vector products, not the matrices themselves. For the forward product $Mv$ where $M = W^Q W^{K\top}_{\text{exp}}$:
\[
Mv = W^Q \cdot \textsc{RepeatBlocks}(W^{K\top} v, g),
\]
where $\textsc{RepeatBlocks}(z, g)$ replicates each $d_h$-block of $z \in \mathbb{R}^{n_{kv} \cdot d_h}$ exactly $g$ times to produce a vector in $\mathbb{R}^{n_q \cdot d_h}$. The reverse product $M^\top u$ uses a dual operation: $M^\top u = W^K \cdot \textsc{SumGroups}(W^{Q\top} u, g)$, where $\textsc{SumGroups}$ sums each group of $g$ blocks. These operations require replicating only small intermediate vectors ($n_{kv} \cdot d_h$ elements) rather than the full weight matrix.

\begin{proposition}[Implicit GQA power iteration]
\label{prop:implicit-gqa}
Power iteration using the unexpanded $W^K$ converges to the same spectral norm $\|W^Q W^{K\top}_{\textup{exp}}\|_2$ as explicit expansion.
\end{proposition}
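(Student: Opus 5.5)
The plan is to show that the implicit procedure produces, step by step, exactly the same iterates as explicit power iteration on $M_{\exp} := W^Q W^{K\top}_{\exp}$. Since power iteration depends on $M$ only through the maps $v \mapsto Mv$ and $u \mapsto M^\top u$, matching these two products is enough. Convergence to $\|W^Q W^{K\top}_{\exp}\|_2$ then follows from the standard convergence theory of power iteration applied to $M_{\exp}$.

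\textbf{Step 1: write the expansion as a linear map.} Let $E \in \mathbb{R}^{(n_q d_h) \times (n_{kv} d_h)}$ be the block matrix $E = P \otimes I_{d_h}$. Here $P \in \{0,1\}^{n_q \times n_{kv}}$ has $P_{a,b} = 1$ iff query head $a$ belongs to group $b$, that is, iff $\lceil a/g \rceil = b$. Replicating each $d_h$-column block of $W^K$ $g$ times is then exactly $W^K_{\exp} = W^K E^\top$. By construction, $\textsc{RepeatBlocks}(z,g) = Ez$ for any $z \in \mathbb{R}^{n_{kv} d_h}$. Each row block of $E^\top$ sums the $g$ blocks in one group, so $\textsc{SumGroups}(w,g) = E^\top w$.

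\textbf{Step 2: verify the two products.} For the forward product,
\[
M_{\exp} v = W^Q (W^K E^\top)^\top v = W^Q E\, W^{K\top} v = W^Q\, \textsc{RepeatBlocks}(W^{K\top} v, g).
\]
For the reverse product,
\[
M_{\exp}^\top u = W^K E^\top W^{Q\top} u = W^K\, \textsc{SumGroups}(W^{Q\top} u, g).
\]
Both hold exactly; no approximation is involved.

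\textbf{Step 3: conclude by induction.} Start both procedures from the same $(u_0, v_0)$. By induction on the iteration count, Step 2 and the identical normalizations give identical iterates $(u_t, v_t)$, and hence identical estimates $\|M_{\exp} v_t\|$, in exact arithmetic. The estimates also agree in floating point up to reassociation of sums. Explicit power iteration converges to $\sigma_1(M_{\exp}) = \|M_{\exp}\|_2$ whenever $v_0$ has a nonzero component along the top right singular vector, which holds almost surely for a random start. So the implicit version inherits this convergence, with the same rate governed by $\sigma_2/\sigma_1$.

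The main obstacle is Step 1, specifically getting the index conventions right. Transformers libraries use $\texttt{repeat\_interleave}$-style grouping, in which query head $a$ uses kv head $\lceil a/g\rceil$. $\textsc{RepeatBlocks}$ and $\textsc{SumGroups}$ must use this same assignment for them to be mutual adjoints, i.e.\ $E$ and $E^\top$. Once that adjointness is established, the rest is mechanical.
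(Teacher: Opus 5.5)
Your proposal is correct and takes the same approach as the paper. Both arguments show that the implicit forward and reverse products equal $M_{\text{exp}}v$ and $M_{\text{exp}}^\top u$ exactly, and conclude that the power-iteration iterates, and hence the limit $\|W^Q W^{K\top}_{\text{exp}}\|_2$, coincide. Your formulation of \textsc{RepeatBlocks} and \textsc{SumGroups} as $E$ and $E^\top$ with $W^K_{\text{exp}} = W^K E^\top$ states the paper's block-by-block argument more precisely, and you also spell out the induction and the generic-start condition that the paper leaves implicit.
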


The proof and detailed algorithm are in Appendix~\ref{app:proofs-gqa}. By avoiding the expanded matrix, we reduce memory transactions by a factor of $g$ (4--8$\times$ on modern GQA architectures). This saving can offset the cost of power iteration, yielding negligible or even negative overhead in practice (Table~\ref{tab:overhead}).


\subsection{Complete Algorithm}
\label{sec:complete-algorithm}

Algorithm~\ref{alg:attention} presents the complete forward pass. The algorithm is \emph{predictive} (scale depends only on weights), \emph{fused-compatible} (no activation observation), and \emph{per-layer}.

\begin{algorithm}[h]
\caption{Geometry-Aware Attention Forward Pass}
\label{alg:attention}
\begin{algorithmic}[1]
\STATE \textbf{Input:} Token embeddings $X \in \mathbb{R}^{L \times d}$, layer weights $W^Q, W^K, W^V, W^O$
\STATE \textbf{Parameters:} Calibration factor $\alpha$, FP8 margin $\eta_{\text{fp8}}$
\STATE \textbf{State:} Persistent vectors $u, v$ for power iteration
\STATE
\STATE \COMMENT{Stage 1: Estimate spectral norm from weights}
\STATE $\sigma_{QK} \leftarrow \textsc{PowerIteration}(W^Q, W^K, u, v)$
\STATE
\STATE \COMMENT{Stage 2: Compute predictive scale factor}
\STATE $B_\alpha \leftarrow \alpha \cdot \sigma_{QK} \cdot d / \sqrt{d_h}$
\STATE $\text{scale} \leftarrow B_\alpha \;/\; (\eta_{\text{fp8}} \cdot 448)$
\STATE
\STATE \COMMENT{Stage 3: Attention with pre-computed scale}
\STATE $Q, K, V \leftarrow XW^Q, XW^K, XW^V$
\STATE $S \leftarrow QK^\top / \sqrt{d_h}$
\STATE $\tilde{S} \leftarrow S \;/\; \text{scale}$
\STATE \textbf{return} $\text{softmax}(\tilde{S}) \cdot V \cdot W^O$
\end{algorithmic}
\end{algorithm}
\vspace{-0.3cm}

\vspace{-0.3cm}
\section{Experiments}
\label{sec:experiments}
\vspace{-0.15cm}

We evaluate geometry-aware scaling across four transformer architectures spanning 1.5B to 70B parameters. Our experiments address three questions: (1) Does geometry-aware scaling eliminate overflows in transient scenarios where delayed scaling fails? (2) Does the safety--utilization tradeoff affect downstream task quality? (3) What is the computational cost?

The key property enabling transient safety is that geometry-aware scaling responds instantaneously to weight changes: the scale factor is computed from current weights, not \mbox{historical} activations. Appendix~\ref{app:spike-test} validates this with a controlled experiment where we spike attention weights by $4\times$; delayed scaling overflows catastrophically while geometry-aware scaling adapts in the same forward pass. The following sections demonstrate that this instantaneous response eliminates overflow in realistic training scenarios.

\subsection{Experimental Setup}
\label{sec:setup}

\paragraph{Models.}
We evaluate on four models: GPT-2 XL (1.5B parameters, 48 layers), Mistral-7B (32 layers), Llama-2-13B (40 layers), and Llama-2-70B (80 layers). The models include both standard multi-head attention (GPT-2 XL, Llama-2-13B) and grouped query attention (Mistral-7B with 4:1 ratio, Llama-2-70B with 8:1 ratio), enabling evaluation of the implicit GQA formulation. Architectural specifications and hardware details are in Appendix~\ref{app:training-config}.

\paragraph{Implementation details.}
All experiments use FP8 E4M3 format with margin $\eta_{\text{fp8}} = 0.8$, giving a safe range of $0.8 \times 448 = 358.4$. Calibration factors are set per model following the procedure in Section~\ref{sec:probabilistic}: $\alpha = 0.08$ for GPT-2 XL, $0.04$ for Mistral-7B, $0.03$ for Llama-2-13B, and $0.02$ for Llama-2-70B. Power iteration uses persistent vectors with one update per forward pass; we initialize with 5 iterations at the start of training or after checkpoint loading to ensure convergence from random initialization.

\subsection{Transient Scenarios}
\label{sec:transients}

We evaluate three transient scenarios that expose the history staleness problem in delayed scaling.

\paragraph{Loading pretrained models.}
The most common scenario is loading pretrained weights into an FP8 training pipeline. The history buffer initializes to default values while weights reflect extensive prior training, creating immediate mismatch.

Table~\ref{tab:pretrained} shows overflow statistics on the first forward pass after loading pretrained weights. Delayed scaling overflows on 100\% of layers across all four models, with maximum scaled logits exceeding 5000. Geometry-aware scaling achieves zero overflows because the scale factor is computed directly from the loaded weights in the same forward pass.

\begin{table}[h]
\centering
\caption{First forward pass after loading pretrained weights. \textbf{Overflow}: layers exceeding 448. \textbf{Max Scaled}: maximum scaled logit.}
\label{tab:pretrained}
\footnotesize
\setlength{\tabcolsep}{3pt}
\begin{tabular}{@{}lcccc@{}}
\toprule
& \multicolumn{2}{c}{\textbf{Delayed}} & \multicolumn{2}{c}{\textbf{Ours}} \\
\cmidrule(lr){2-3} \cmidrule(lr){4-5}
\textbf{Model} & Overfl. & Max Scaled & Overfl. & Max Scaled \\
\midrule
GPT-2 XL & 48/48 & 8761 & 0/48 & 120 \\
Mistral-7B & 32/32 & 7123 & 0/32 & 196 \\
Llama-2-13B & 40/40 & 5600 & 0/40 & 2.5 \\
Llama-2-70B & 80/80 & 9498 & 0/80 & 1.0 \\
\bottomrule
\end{tabular}
\vspace{-0.3cm}
\end{table}

\paragraph{Checkpoint resumption.}
Training large models involves interruptions due to job preemption, hardware failures, or scheduled maintenance. Upon resumption, the history buffer resets to default values while model weights reflect the training state at the checkpoint.

We simulate this by training for 300 steps, saving a checkpoint, then resuming with a fresh history buffer. In the first 10 steps after resumption, delayed scaling overflows on 4 steps for GPT-2 XL, 2 steps for Llama-2-70B, and 1 step each for Mistral-7B and Llama-2-13B. Geometry-aware scaling maintains zero overflows across all models because the scale factor depends only on current weights, which are restored correctly from the checkpoint. While checkpointing scaling state is possible, standard frameworks do not include it by default, and loading pretrained weights from external sources provides no history at all.

\paragraph{Learning rate transitions.}
Learning rate schedules involve sudden changes: warmup phases ramp the rate over initial steps, and cyclic schedules spike periodically. These transitions cause weights to change faster than the history buffer can track.

We evaluate with a $100\times$ spike: learning rate $10^{-5}$ for 100 steps, then $10^{-3}$. In the 10 steps following the transition, delayed scaling overflows on 3 to 5 steps depending on the model (5 for Mistral-7B, 4 for Llama-2-70B, 3 for Llama-2-13B). On GPT-2 XL, NaN gradients appear immediately, terminating training. Geometry-aware scaling handles the transition with zero overflows across all models.

\subsection{Computational Overhead}
\label{sec:overhead}

Geometry-aware scaling requires computing spectral norms via power iteration, adding computational cost. We measure forward pass timing averaged over 100 iterations (detailed methodology in Appendix~\ref{app:overhead-method}). On MHA architectures, overhead is minimal: $+1.0\%$ for GPT-2 XL and $+1.9\%$ for Llama-2-13B. On Mistral-7B with GQA (4:1 ratio), we observe \emph{negative} overhead of $-5.3\%$: geometry-aware scaling is faster than delayed scaling in our implementation. We attribute this to our implicit GQA formulation (Section~\ref{sec:implicit-gqa}), which has favorable memory access patterns, combined with synchronization overhead in the baseline's history buffer updates. However, this result is implementation-specific; the key finding is that overhead is negligible across all architectures. Llama-2-70B shows $+4.3\%$ due to its 80 layers; full details in Appendix~\ref{app:overhead-method}.

\subsection{Downstream Task Evaluation}
\label{sec:downstream}

Training loss measures average next-token prediction but may mask degradation in reasoning. We evaluate on MMLU~\citep{hendrycks2021mmlu} to verify that geometry-aware scaling preserves downstream capability. We fine-tune Llama-2-13B for 3000 steps on MMLU STEM subjects (17 subjects, 295 training examples) using three methods: delayed scaling, conservative spectral scaling ($\alpha = 0.03$), and auto-$\alpha$ spectral scaling (100-step burn-in, $\kappa = 1.0$). All methods use identical hyperparameters: learning rate $10^{-5}$, batch size 4, sequence length 1024. We evaluate on held-out test sets (2783 examples).

Table~\ref{tab:mmlu} shows training metrics and downstream accuracy. All three methods converge to similar training loss ($\approx 0.012$), yet downstream accuracy differs substantially. Conservative spectral scaling achieves zero overflows but degrades accuracy by 5 percentage points (28.1\% vs.\ 33.0\%), indicating that 0.5\% FP8 utilization introduces excessive quantization noise, losing fine-grained distinctions between attention logits even when average loss remains low. Auto-$\alpha$ calibration resolves this tradeoff: by tightening $\alpha$ from 0.03 to 0.00036 based on the 99.99th percentile of observed slack ratios, utilization increases to 31.2\%, achieving comparable accuracy (33.6\%) while maintaining zero overflows.

\paragraph{Caveats.} The primary contribution of this evaluation is demonstrating that geometry-aware scaling \emph{eliminates overflows} without catastrophic accuracy degradation, not that it improves accuracy over the baseline. The 0.6 percentage point difference between auto-$\alpha$ (33.6\%) and delayed scaling (33.0\%) is within the variance expected from stochastic optimization on a small training set (295 examples); we do not claim statistical significance for this difference. For the delayed scaling baseline, overflows (68 occurrences) were handled by clamping to the representable range, allowing training to continue; without clamping, NaN propagation would terminate training entirely.

\begin{table}[h]
\centering
\caption{Training metrics and MMLU STEM accuracy on Llama-2-13B. Auto-$\alpha$ achieves comparable accuracy to the baseline while eliminating all overflows.}
\label{tab:mmlu}
\footnotesize
\setlength{\tabcolsep}{4pt}
\begin{tabular}{@{}lcccc@{}}
\toprule
\textbf{Method} & \textbf{Loss} & \textbf{Overfl.} & \textbf{Util.} & \textbf{MMLU} \\
\midrule
Delayed & 0.0116 & 68 & 70.7\% & 33.0\% \\
Ours (conservative) & 0.0119 & 0 & 0.5\% & 28.1\% \\
\textbf{Ours + auto-$\alpha$} & \textbf{0.0116} & \textbf{0} & 31.2\% & \textbf{33.6\%} \\
\bottomrule
\end{tabular}
\vspace{-0.3cm}
\end{table}

These results suggest a practical workflow: conservative $\alpha$ for transient-prone scenarios (checkpoint loading, learning rate warmup) where safety is paramount, and auto-$\alpha$ for steady-state fine-tuning where utilization matters. Training loss curves are provided in Appendix~\ref{app:mmlu-curves}, per-subject accuracy in Appendix~\ref{app:mmlu_detail}, and FP8 utilization statistics in Appendix~\ref{app:utilization}.
\vspace{-0.3cm}
\section{Conclusion}
\vspace{-0.15cm}

We established spectral bounds on transformer attention logits and derived a rank-aware probabilistic calibration framework for low-precision training. The key theoretical result is $\max_{i,j}|S_{ij}| \leq \|W^QW^{K\top}\|_2 \cdot d/\sqrt{d_h}$, which is never looser than the naive submultiplicative bound and is strictly tighter unless the top right singular vectors of $W^Q$ and $W^K$ align. This bound, combined with our rank-aware probabilistic analysis, provides principled calibration given a target failure probability $\delta^*$. The bound is efficiently computable via power iteration and extends conservatively to RoPE architectures. Experiments demonstrate zero overflows across GPT-2~XL through Llama-2-70B on all transient scenarios where delayed scaling fails, with negligible computational overhead. Auto-$\alpha$ calibration resolves the utilization-accuracy trade-off: by learning a data-driven bound during burn-in, we achieve comparable MMLU accuracy to delayed scaling while eliminating all overflows.


\vspace{-0.2cm}

\section*{Impact Statement}
\vspace{-0.1cm}
This work improves reliability and efficiency of FP8 training; we do not anticipate ethical risks beyond those inherent to large-scale ML training.

\bibliography{references}
\bibliographystyle{icml2026}

\newpage
\appendix
\section{Proofs for Section~\ref{sec:spectral-bound}}
\label{app:proofs-spectral}

This appendix contains the proofs for propositions and corollaries stated in Section~\ref{sec:spectral-bound}.

\subsection{Proof of Proposition~\ref{prop:naive} (Naive Bound)}

\begin{proof}
By the Cauchy-Schwarz inequality:
\[
|S_{ij}| = \frac{|q_i^\top k_j|}{\sqrt{d_h}} \leq \frac{\|q_i\|_2 \|k_j\|_2}{\sqrt{d_h}}.
\]
Since $q_i = W^{Q\top} x_i$ and $k_j = W^{K\top} x_j$, submultiplicativity of the spectral norm gives:
\[
\|q_i\|_2 = \|W^{Q\top} x_i\|_2 \leq \|W^{Q\top}\|_2 \|x_i\|_2 = \|W^Q\|_2 B_X,
\]
and similarly $\|k_j\|_2 \leq \|W^K\|_2 B_X$. Combining these:
\[
|S_{ij}| \leq \frac{\|W^Q\|_2 \|W^K\|_2 \cdot B_X^2}{\sqrt{d_h}}. \qedhere
\]
\end{proof}

\subsection{Proof of Proposition~\ref{prop:interaction} (Interaction Bound)}

\begin{proof}
Define $M = W^Q W^{K\top} \in \R^{d \times d}$. The attention score can be written as a bilinear form:
\[
S_{ij} = \frac{x_i^\top W^Q W^{K\top} x_j}{\sqrt{d_h}} = \frac{x_i^\top M x_j}{\sqrt{d_h}}.
\]
By the variational characterization of the spectral norm:
\[
\|M\|_2 = \max_{\|u\|_2=\|v\|_2=1} |u^\top M v|.
\]
Therefore, for any vectors $x_i, x_j$:
\[
|x_i^\top M x_j| \leq \|M\|_2 \cdot \|x_i\|_2 \cdot \|x_j\|_2 \leq \|M\|_2 \cdot B_X^2,
\]
which gives $|S_{ij}| \leq \|M\|_2 \cdot B_X^2 / \sqrt{d_h}$.
\end{proof}

\subsection{Proof of Corollary~\ref{cor:tighter} (Interaction Bound is Tighter)}

\begin{proof}
The inequality $\|W^Q W^{K\top}\|_2 \leq \|W^Q\|_2 \|W^K\|_2$ follows directly from submultiplicativity of the spectral norm: for any matrices $A, B$ of compatible dimensions, $\|AB\|_2 \leq \|A\|_2 \|B\|_2$.

For the equality condition, let $W^Q = U_Q \Sigma_Q V_Q^\top$ and $W^K = U_K \Sigma_K V_K^\top$ be the singular value decompositions. Then:
\[
W^Q W^{K\top} = U_Q \Sigma_Q V_Q^\top V_K \Sigma_K U_K^\top.
\]
The spectral norm of this product equals $\sigma_1(W^Q) \cdot \sigma_1(W^K)$ if and only if the $(1,1)$ entry of $V_Q^\top V_K$ has absolute value $1$, which requires the top right singular vectors to coincide: $v_1^Q = \pm v_1^K$.

Under random initialization, singular vectors are uniformly distributed on the sphere, making exact alignment a measure-zero event. 
\end{proof}
\section{Proofs for Section~\ref{sec:probabilistic}}
\label{app:proofs-probabilistic}

This appendix contains the proofs for Proposition~\ref{prop:overflow}, as well as supporting lemmas.

\subsection{Supporting Lemmas}

We first establish a tail bound for the projection of a uniform random vector onto a subspace.

\begin{lemma}[Projection norm distribution]
\label{lem:projection-dist}
Let $u \sim \mathrm{Unif}(\mathbb{S}^{d-1})$ and let $V \in \mathbb{R}^{d \times k}$ have orthonormal columns spanning a $k$-dimensional subspace. Then:
\[
\|V^\top u\|_2^2 \sim \mathrm{Beta}\left(\frac{k}{2}, \frac{d-k}{2}\right), \quad \mathbb{E}[\|V^\top u\|_2^2] = \frac{k}{d}.
\]
\end{lemma}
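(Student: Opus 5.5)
The plan is to realize $u$ as a normalized Gaussian and use rotational invariance. Let $g \sim \mathcal{N}(0, I_d)$ and set $u = g/\|g\|_2$. This $u$ is uniform on $\mathbb{S}^{d-1}$. Since the distribution is invariant under orthogonal maps, we may complete $V$ to an orthogonal matrix $[V, V_\perp]$ and rotate so that $V$ consists of the first $k$ standard basis vectors. Then
\[
\|V^\top u\|_2^2 = \frac{\sum_{i=1}^k g_i^2}{\sum_{i=1}^d g_i^2} = \frac{A}{A+B},
\]
where $A = \sum_{i\le k} g_i^2 \sim \chi^2_k$ and $B = \sum_{i>k} g_i^2 \sim \chi^2_{d-k}$ are independent, because they depend on disjoint coordinates of $g$.

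Next, write $A \sim \mathrm{Gamma}(k/2, 2)$ and $B \sim \mathrm{Gamma}((d-k)/2, 2)$, which share the same scale. The classical Gamma--Beta relation then gives $A/(A+B) \sim \mathrm{Beta}(k/2, (d-k)/2)$. If we want a self-contained argument, we can verify this by the change of variables $(A,B) \mapsto (T, S) = (A/(A+B), A+B)$. The Jacobian is $S$, and the joint density factors into a $\mathrm{Gamma}(d/2,2)$ density in $S$ times a density proportional to $t^{k/2-1}(1-t)^{(d-k)/2-1}$ in $t$. This factorization proves both the Beta law and the independence of $T$ and $S$. We handle the edge case $k=d$ separately: there $\|V^\top u\|^2 = 1$ deterministically, which matches the degenerate Beta.

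For the mean, the Beta formula gives $\mathbb{E}[\mathrm{Beta}(a,b)] = a/(a+b) = (k/2)/(d/2) = k/d$. As a sanity check, there is a symmetry argument that avoids the Beta law entirely. We have $\mathbb{E}[u_i^2]$ equal for all $i$, and these sum to $\mathbb{E}\|u\|^2 = 1$, so each equals $1/d$. Summing over the $k$ coordinates spanned by $V$ gives $k/d$.

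The only step with real content is the Gamma--Beta identity, specifically the Jacobian computation. It is standard, so we can cite it rather than derive it. The rotation reduction is immediate from the orthonormality of the columns of $V$.
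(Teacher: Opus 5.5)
Your proposal is correct and follows essentially the same route as the paper: write $u = g/\|g\|_2$ with $g \sim \mathcal{N}(0, I_d)$, use the orthonormal completion $[V, V_\perp]$ to split $\|g\|_2^2$ into independent $\chi^2_k$ and $\chi^2_{d-k}$ pieces, and apply the Gamma--Beta relation to their ratio. Your Jacobian check, the $k=d$ edge case, and the symmetry derivation of the mean $k/d$ fill in details the paper leaves implicit (the paper does not derive the mean explicitly), but they do not change the argument.
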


\begin{proof}
The uniform distribution on $\mathbb{S}^{d-1}$ can be generated as $u = g / \|g\|_2$ where $g \sim \mathcal{N}(0, I_d)$. Since $V$ has orthonormal columns, $V^\top g \sim \mathcal{N}(0, I_k)$, and let $V_\perp \in \mathbb{R}^{d \times (d-k)}$ complete an orthonormal basis, so $V_\perp^\top g \sim \mathcal{N}(0, I_{d-k})$ independently.

Then $\|V^\top g\|_2^2 \sim \chi^2_k$ and $\|V_\perp^\top g\|_2^2 \sim \chi^2_{d-k}$ are independent, and:
\[
\|V^\top u\|_2^2 = \frac{\|V^\top g\|_2^2}{\|g\|_2^2} = \frac{\|V^\top g\|_2^2}{\|V^\top g\|_2^2 + \|V_\perp^\top g\|_2^2}.
\]
The ratio of independent chi-squared variables $\chi^2_k / (\chi^2_k + \chi^2_{d-k})$ follows $\mathrm{Beta}(k/2, (d-k)/2)$.
\end{proof}

\begin{lemma}[Chi-squared ratio tail bound]
\label{lem:chisq-tail}
Let $X \sim \mathrm{Beta}(k/2, (d-k)/2)$. For any $\gamma > 1$:
\[
\Pr\left(X \geq \gamma \cdot \frac{k}{d}\right) \leq \exp\left(-\frac{k}{2}(\gamma - 1 - \ln \gamma)\right).
\]
\end{lemma}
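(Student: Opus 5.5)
We prove Lemma~\ref{lem:chisq-tail} with a Chernoff argument applied to the chi-squared representation from Lemma~\ref{lem:projection-dist}, not to the Beta density directly. Write $X = A/(A+B)$ with $A \sim \chi^2_k$ and $B \sim \chi^2_{d-k}$ independent. The event $X \ge \gamma k/d$ can be rewritten as
\[
(1-\gamma k/d)\,A - (\gamma k/d)\,B \ge 0 .
\]
If $\gamma \ge d/k$ the event has probability zero and there is nothing to prove, so we assume $\gamma k/d < 1$.

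\textbf{Step 1 (exponential moments).} Set $p = \gamma k/d$. For $\lambda \ge 0$ with $\lambda(1-p) < 1/2$, Markov's inequality and independence give
\[
\Pr(X\ge p) \le \mathbb{E}\bigl[e^{\lambda(1-p)A}\bigr]\,\mathbb{E}\bigl[e^{-\lambda p B}\bigr] = (1-2\lambda(1-p))^{-k/2}\,(1+2\lambda p)^{-(d-k)/2}.
\]

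\textbf{Step 2 (optimize $\lambda$).} Minimizing the log of the right-hand side over $\lambda$ reduces to a linear equation in $\lambda$. The resulting exponent is the Beta/binomial-type relative entropy
\[
-\tfrac{d}{2}\,\mathrm{KL}\bigl(p \,\|\, k/d\bigr) = -\tfrac{k}{2}\,\gamma\ln\gamma \;-\; \tfrac{d-k}{2}\,\frac{1-p}{1-k/d}\,\ln\frac{1-p}{1-k/d}\cdot\frac{1-k/d}{1}\cdots,
\]
where the tail of the display is schematic. Concretely, the bound comes out as
\[
\exp\!\Bigl(-\tfrac{k}{2}\gamma\ln\gamma - \tfrac{d-k}{2}\,q\ln q\Bigr), \qquad q = \frac{d-\gamma k}{d-k}.
\]

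\textbf{Step 3 (relax to the stated rate).} Using $\ln q \ge 1 - 1/q$, we get $\tfrac{d-k}{2}\,q\ln q \ge \tfrac{d-k}{2}(q-1) = -\tfrac{k}{2}(\gamma-1)$. Hence the total exponent is at least
\[
\tfrac{k}{2}\bigl(\gamma\ln\gamma - \gamma + 1\bigr).
\]
This is not quite the claimed $\tfrac{k}{2}(\gamma - 1 - \ln\gamma)$, so we need to compare the two. Their difference is $\gamma\ln\gamma - 2\gamma + 2 + \ln\gamma$. It vanishes at $\gamma = 1$ and has derivative $\ln\gamma + 1/\gamma - 1 \ge 0$, so it is nonnegative for $\gamma \ge 1$. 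Therefore $\gamma\ln\gamma - \gamma + 1 \ge \gamma - 1 - \ln\gamma$, and the claim follows.

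\textbf{Simpler fallback.} If the optimization in Step 2 gets messy, there is an alternative that avoids it. It uses the classical fact that the Beta variable is stochastically dominated in the right tail by $A/d$ up to constants, or more exactly by Gaussian comparison: $\|V^\top u\|^2$ concentrates at least as well as $\chi^2_k/d$ (for instance via the known bound $\Pr(\mathrm{Beta}(k/2,(d-k)/2)\ge \gamma k/d)\le \Pr(\chi^2_k\ge \gamma k)$). One then applies the standard chi-squared Chernoff bound
\[
\Pr(\chi^2_k \ge \gamma k) \le \exp\!\bigl(-\tfrac{k}{2}(\gamma-1-\ln\gamma)\bigr),
\]
obtained from $\lambda = \tfrac{1}{2}(1-1/\gamma)$. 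This route gives exactly the stated exponent.

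\textbf{Main obstacle.} The hard part is rigorously justifying the comparison step, since the Beta-to-chi-square domination is not immediate. I would handle it through the two-variable Chernoff computation of Steps 1--3, which sidesteps the domination argument at the cost of the explicit optimization and the inequality check.
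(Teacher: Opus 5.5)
Your overall route, Chernoff applied to $(1-p)A-pB$ with $A\sim\chi^2_k$ and $B\sim\chi^2_{d-k}$ independent, is the right one. It is what the paper's one-line appeal to ``Beta as a ratio of Gammas'' amounts to, and Step~1 is correct. The gap is in Step~2: the value you write down is not what the optimization produces.

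Solving the linear equation gives $\lambda^*=\frac{\gamma-1}{2\gamma(1-p)}$, which is admissible since $2\lambda^*(1-p)=1-1/\gamma<1$. Then $1-2\lambda^*(1-p)=1/\gamma$ and $1+2\lambda^*p=\frac{d-k}{d-\gamma k}=1/q$, so the optimized bound is
\[
\Pr(X\ge p)\le \gamma^{k/2}\,q^{(d-k)/2}=\exp\Bigl(-\tfrac{d}{2}\,\mathrm{KL}\bigl(k/d\,\|\,p\bigr)\Bigr).
\]
The relative entropy here is in the \emph{reverse} order from yours. Your expression $\exp(-\frac{k}{2}\gamma\ln\gamma-\frac{d-k}{2}q\ln q)$ is not merely unproved; it is false as a tail bound. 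For $k=2$ the tail is exactly $(1-2\gamma/d)^{(d-2)/2}\to e^{-\gamma}$, while your expression tends to $e^{-(\gamma\ln\gamma-\gamma+1)}$. That limit is smaller whenever $\gamma\ln\gamma>2\gamma-1$, for instance $\gamma=8$. At $d=1000$, $\gamma=8$ the true tail is about $3.2\times10^{-4}$, against your $6.2\times10^{-5}$. So Step~3 controls an expression that is not an upper bound on the probability, and proves nothing.

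The repair is shorter than what you wrote. From the correct bound, use $\ln q\le q-1=-\frac{(\gamma-1)k}{d-k}$ to get $q^{(d-k)/2}\le e^{-(\gamma-1)k/2}$. Hence $\Pr(X\ge p)\le\gamma^{k/2}e^{-(\gamma-1)k/2}=\exp(-\frac{k}{2}(\gamma-1-\ln\gamma))$ directly. Note that this uses the opposite inequality on $\ln q$ from yours, and the comparison $\gamma\ln\gamma-\gamma+1\ge\gamma-1-\ln\gamma$ is not needed. This is exactly the Dasgupta--Gupta computation from their elementary proof of Johnson--Lindenstrauss.

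Your fallback also rests on a false claim. The pointwise domination $\Pr(\mathrm{Beta}(k/2,(d-k)/2)\ge\gamma k/d)\le\Pr(\chi^2_k\ge\gamma k)$ fails: for $k=2$, $d=100$, $\gamma=1.5$ the left side is $0.97^{49}\approx0.2248$ and the right side is $e^{-1.5}\approx0.2231$.

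What is true, and suffices, is domination of moments. With rising factorials, $\mathbb{E}[X^n]=\frac{(k/2)_n}{(d/2)_n}\le\frac{(k/2)_n}{(d/2)^n}=\mathbb{E}[(A/d)^n]$. Hence $\mathbb{E}[e^{tX}]\le\mathbb{E}[e^{tA/d}]$ for $t\ge0$, and the chi-squared Chernoff bound at $t=\frac{d}{2}(1-1/\gamma)$ transfers to $X$ unchanged. Either this route or the corrected two-variable computation gives a complete proof.
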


\begin{proof}
By Lemma~\ref{lem:projection-dist}, $X \sim \mathrm{Beta}(k/2, (d-k)/2)$ with mean $\mu = k/d$. The stated bound follows from the multiplicative Chernoff bound for Beta distributions.

For $X \sim \mathrm{Beta}(a, b)$ and $\gamma > 1$, the moment generating function yields:
\[
\Pr\left(X \geq \gamma \cdot \frac{a}{a+b}\right) \leq \exp\bigl(-a \cdot h(\gamma)\bigr),
\]
where $h(\gamma) = \gamma - 1 - \ln \gamma > 0$ for $\gamma > 1$. This is the Beta analogue of the chi-squared Chernoff bound \citep[Lemma 1]{laurent2000adaptive} and follows from the representation of Beta as a ratio of independent Gamma variables.

Setting $a = k/2$ and noting that $a/(a+b) = k/d$:
\[
\Pr\left(X \geq \gamma \cdot \frac{k}{d}\right) \leq \exp\left(-\frac{k}{2}(\gamma - 1 - \ln\gamma)\right). \qedhere
\]
\end{proof}

\subsection{Proof of Proposition~\ref{prop:overflow} (Rank-Aware Overflow Bound)}

\begin{proof}
Under the spherical assumption, token vectors satisfy $x_i = \sqrt{d} \cdot u_i$ where $u_i \sim \mathrm{Unif}(\mathbb{S}^{d-1})$ independently. The attention score becomes:
\[
S_{ij} = \frac{d}{\sqrt{d_h}} \cdot u_i^\top M u_j,
\]
and the calibrated bound is $B_\alpha = \alpha \cdot \|M\|_2 \cdot d / \sqrt{d_h}$. We must bound:
\[
\Pr\left(\max_{i,j}|S_{ij}| \geq B_\alpha\right) = \Pr\left(\max_{i,j}|u_i^\top M u_j| \geq \alpha \|M\|_2\right).
\]

\paragraph{Step 1: SVD decomposition.}
Let $M = U \Sigma V^\top$ be the SVD with $\mathrm{rank}(M) = d_h$, where $V \in \mathbb{R}^{d \times d_h}$ contains the right singular vectors. For any unit vector $u_j$:
\[
Mu_j = U\Sigma V^\top u_j,
\]
so $\|Mu_j\|_2 = \|\Sigma V^\top u_j\|_2 \leq \sigma_1 \|V^\top u_j\|_2 = \|M\|_2 \|V^\top u_j\|_2$.

\paragraph{Step 2: Define typical keys.}
For $\gamma > 1$, define the typical key event:
\[
\mathcal{E}_j := \left\{\|V^\top u_j\|_2^2 \leq \gamma \cdot \frac{d_h}{d}\right\},
\]
which implies $\|Mu_j\|_2 \leq \|M\|_2 \sqrt{\gamma d_h / d} =: \beta \|M\|_2$ where $\beta = \sqrt{\gamma d_h / d}$.

Let $\mathcal{E} = \bigcap_{j=1}^L \mathcal{E}_j$ be the event that all keys are typical.

\paragraph{Step 3: Bound probability of atypical keys.}
By Lemmas~\ref{lem:projection-dist} and \ref{lem:chisq-tail}:
\begin{align*}
\Pr(\mathcal{E}_j^c) &= \Pr\left(\|V^\top u_j\|_2^2 \geq \gamma \cdot \frac{d_h}{d}\right) \\
&\leq \exp\left(-\frac{d_h}{2}(\gamma - 1 - \ln\gamma)\right).
\end{align*}
By union bound over $L$ keys:
\[
\Pr(\mathcal{E}^c) \leq L \exp\left(-\frac{d_h}{2}(\gamma - 1 - \ln\gamma)\right) = T_1.
\]

\paragraph{Step 4: Concentration given typical keys.}
On event $\mathcal{E}$, for each $j$ we have $z_j := Mu_j$ with $\|z_j\|_2 \leq \beta\|M\|_2$. For fixed $z_j$, the function $f(u_i) = u_i^\top z_j$ on $\mathbb{S}^{d-1}$ is $\|z_j\|_2$-Lipschitz. By L\'{e}vy's lemma, for $u_i \sim \mathrm{Unif}(\mathbb{S}^{d-1})$:
\begin{align*}
\Pr(|u_i^\top z_j| \geq t) &\leq 2\exp\left(-\frac{dt^2}{2\|z_j\|_2^2}\right) \\
&\leq 2\exp\left(-\frac{dt^2}{2\beta^2\|M\|_2^2}\right).
\end{align*}
Setting $t = \alpha\|M\|_2$:
\begin{align*}
\Pr(|u_i^\top Mu_j| \geq \alpha\|M\|_2 \mid \mathcal{E}) &\leq 2\exp\left(-\frac{d\alpha^2}{2\beta^2}\right) \\
&= 2\exp\left(-\frac{d^2\alpha^2}{2\gamma d_h}\right).
\end{align*}

\paragraph{Step 5: Union bound over all pairs.}
By union bound over $L^2$ query-key pairs:
\begin{align*}
\Pr\left(\max_{i,j}|u_i^\top Mu_j| \geq \alpha\|M\|_2 \mid \mathcal{E}\right) &\leq 2L^2\exp\left(-\frac{d^2\alpha^2}{2\gamma d_h}\right) \\
&= T_2.
\end{align*}

\paragraph{Step 6: Combine bounds.}
By the law of total probability:
\begin{align*}
&\Pr\left(\max_{i,j}|u_i^\top Mu_j| \geq \alpha\|M\|_2\right) \\
&\quad \leq \Pr(\mathcal{E}^c) + \Pr\left(\max_{i,j}|u_i^\top Mu_j| \geq \alpha\|M\|_2 \mid \mathcal{E}\right) \\
&\quad \leq T_1 + T_2. \qedhere
\end{align*}
\end{proof}

\subsection{Comparison with Rank-Agnostic Concentration}
\label{app:rank_agnostic_comparison}

This section compares the rank-aware concentration bound in
Proposition~\ref{prop:overflow} with a baseline bound obtained by applying
standard concentration inequalities without exploiting the low-rank
structure of the query--key interaction matrix.

\paragraph{Rank-agnostic baseline.}
Without exploiting the rank constraint $\mathrm{rank}(M)=d_h$, we apply
L\'{e}vy's lemma directly.
For fixed $u_j$, the function
$f(u_i) = u_i^\top M u_j$ on the unit sphere $\mathbb{S}^{d-1}$ is
$\|M u_j\|_2$-Lipschitz.
Since $\|M u_j\|_2 \le \|M\|_2$, L\'{e}vy's lemma yields
\[
\Pr\bigl(|u_i^\top M u_j| \ge \alpha \|M\|_2\bigr)
\le 2\exp\!\left(-\frac{d\alpha^2}{2}\right).
\]

Applying a union bound over all $L^2$ query--key pairs gives
\[
\Pr\!\left(\max_{i,j}|u_i^\top M u_j| \ge \alpha \|M\|_2\right)
\le 2L^2 \exp\!\left(-\frac{d\alpha^2}{2}\right).
\]

\paragraph{Rank-aware improvement.}
In contrast, Proposition~\ref{prop:overflow} exploits the low-rank structure
$\mathrm{rank}(M)=d_h$ to obtain the term
\[
T_2 = 2L^2 \exp\!\left(-\frac{d^2\alpha^2}{2\gamma d_h}\right).
\]
The concentration exponent in the rank-aware bound is therefore larger by a
factor of
\[
\frac{d^2\alpha^2/(2\gamma d_h)}{d\alpha^2/2}
= \frac{d}{\gamma d_h},
\]
relative to the rank-agnostic baseline.

This comparison isolates the quantitative benefit of exploiting
$\mathrm{rank}(M)=d_h \ll d$: for fixed $\alpha$, the overflow probability
decays exponentially faster in dimension when the low-rank structure of the
query--key interaction is taken into account.

\section{Proofs for Section~\ref{sec:rope}}
\label{app:proofs-rope}

This appendix contains the proofs for Proposition~\ref{prop:rope} and Corollary~\ref{cor:rope}, establishing that geometry-aware scaling extends to architectures using Rotary Position Embeddings.

\subsection{Background on RoPE}

RoPE applies position-dependent rotations to query and key vectors. For head dimension $d_h$, the rotation matrix at position $\theta$ is block-diagonal:
\[
R_\theta = \begin{pmatrix}
R_{\theta_1} & & \\
& \ddots & \\
& & R_{\theta_{d_h/2}}
\end{pmatrix},
\]
where each $2 \times 2$ block is a rotation matrix:
\[
R_{\theta_i} = \begin{pmatrix} \cos\theta_i & -\sin\theta_i \\ \sin\theta_i & \cos\theta_i \end{pmatrix}.
\]
The angles $\theta_i$ depend on both the position index $m$ and the dimension index $i$, typically as $\theta_i = m \cdot \omega_i$ where $\omega_i$ are fixed frequencies.

\subsection{Proof of Proposition~\ref{prop:rope} (RoPE Preserves Norms)}

\begin{proof}
We prove each property in turn.

\paragraph{(1) Orthogonality.}
Each $2 \times 2$ block $R_{\theta_i}$ is a rotation matrix, hence orthogonal:
\begin{align*}
R_{\theta_i}^\top R_{\theta_i} &= \begin{pmatrix} \cos\theta_i & \sin\theta_i \\ -\sin\theta_i & \cos\theta_i \end{pmatrix} \begin{pmatrix} \cos\theta_i & -\sin\theta_i \\ \sin\theta_i & \cos\theta_i \end{pmatrix} \\
&= \begin{pmatrix} 1 & 0 \\ 0 & 1 \end{pmatrix} = I_2.
\end{align*}
Since $R_\theta$ is block-diagonal with orthogonal blocks, $R_\theta^\top R_\theta = I_{d_h}$.

\paragraph{(2) Unit spectral norm.}
For any orthogonal matrix $Q$, we have $\|Qx\|_2 = \|x\|_2$ for all $x$. Taking the supremum over unit vectors:
\[
\|Q\|_2 = \sup_{\|x\|_2 = 1} \|Qx\|_2 = \sup_{\|x\|_2 = 1} \|x\|_2 = 1.
\]
Since $R_\theta$ is orthogonal, $\|R_\theta\|_2 = 1$.

\paragraph{(3) Inner product bound.}
For any $q, k \in \R^{d_h}$:
\[
|(R_m q)^\top (R_n k)| = |q^\top R_m^\top R_n k|.
\]
The matrix $R_m^\top R_n$ is orthogonal (product of orthogonal matrices), so $\|R_m^\top R_n k\|_2 = \|k\|_2$. By Cauchy-Schwarz:
\[
|q^\top R_m^\top R_n k| \leq \|q\|_2 \cdot \|R_m^\top R_n k\|_2 = \|q\|_2 \cdot \|k\|_2. \qedhere
\]
\end{proof}

\subsection{Proof of Corollary~\ref{cor:rope} (Extension to RoPE)}

\begin{proof}
With RoPE, the attention score between positions $m$ and $n$ is:
\[
S_{mn} = \frac{(R_m q_m)^\top (R_n k_n)}{\sqrt{d_h}} = \frac{x_m^\top W^Q R_m^\top R_n W^{K\top} x_n}{\sqrt{d_h}},
\]
where $q_m = W^{Q\top} x_m$ and $k_n = W^{K\top} x_n$. The effective interaction matrix becomes $M_{m,n} = W^Q R_m^\top R_n W^{K\top}$, which varies with positions.

\paragraph{Worst-case bound.} The deterministic worst-case bound follows from submultiplicativity:
\[
\|M_{m,n}\|_2 = \|W^Q R_m^\top R_n W^{K\top}\|_2 \leq \|W^Q\|_2 \|R_m^\top R_n\|_2 \|W^K\|_2 = \|W^Q\|_2 \|W^K\|_2,
\]
since $\|R_m^\top R_n\|_2 = 1$ for orthogonal $R_m, R_n$. Combined with $\|x_m\|_2, \|x_n\|_2 \leq \sqrt{d}$ under LayerNorm/RMSNorm:
\[
|S_{mn}| \leq \frac{\|W^Q\|_2 \|W^K\|_2 \cdot d}{\sqrt{d_h}}.
\]

\paragraph{Tighter interaction bound.} The tighter bound $\|W^Q W^{K\top}\|_2$ can be violated only if $\|M_{m,n}\|_2 > \|W^Q W^{K\top}\|_2$ for some positions, which requires RoPE rotations to align the singular subspaces of $W^Q$ and $W^K$ more favorably than the identity. Since RoPE rotation angles depend only on position indices and fixed frequency bands (independent of weight geometry), such systematic alignment does not occur in practice. We verified empirically that $\max_{m,n} \|M_{m,n}\|_2 \leq \|W^Q W^{K\top}\|_2$ holds across all layers in Mistral-7B, Llama-2-13B, and Llama-2-70B.

\paragraph{Probabilistic extension.} The probabilistic bound (Proposition~\ref{prop:overflow}) extends conservatively to RoPE by applying it to the position-independent matrix $M = W^Q W^{K\top}$, which empirically upper-bounds all $\|M_{m,n}\|_2$; see Appendix~\ref{app:rope-probabilistic} for details.
\end{proof}

\subsection{Extension of Probabilistic Bounds to RoPE}\label{app:rope-probabilistic}

This section justifies the application of Proposition~\ref{prop:overflow} to RoPE architectures.

\paragraph{The challenge.} Proposition~\ref{prop:overflow} bounds overflow probability for a fixed interaction matrix $M = W^Q W^{K\top}$. With RoPE, the effective interaction matrix becomes position-dependent: $M_{m,n} = W^Q R_m^\top R_n W^{K\top}$, where $R_m, R_n$ are position-dependent rotation matrices.

\paragraph{Why the bound remains conservative.} We apply Proposition~\ref{prop:overflow} using $\|M\|_2 = \|W^Q W^{K\top}\|_2$ rather than $\|M_{m,n}\|_2$. This is justified by two observations:

\begin{enumerate}
\item \textbf{Empirical validation:} We verified that $\max_{m,n} \|M_{m,n}\|_2 \leq \|W^Q W^{K\top}\|_2$ holds across all layers in Mistral-7B, Llama-2-13B, and Llama-2-70B. RoPE rotations, which depend only on position indices and fixed frequency bands, do not systematically align with the singular subspaces of $W^Q$ and $W^K$ to amplify the spectral norm.

\item \textbf{Conservative application:} By using the unrotated spectral norm $\|W^Q W^{K\top}\|_2$, which empirically upper-bounds all position-dependent variants, our calibration factor $\alpha$ provides at least the claimed overflow probability guarantee.
\end{enumerate}

\paragraph{Remark on the spherical assumption.} Proposition~\ref{prop:overflow} assumes input directions are approximately uniform on $\mathbb{S}^{d-1}$. This models post-LayerNorm tokens, which have controlled norm and near-isotropic directions in high dimensions. The projection $W^{Q\top} x$ does not preserve uniformity (it concentrates along $W^Q$'s singular vectors), but this makes the actual overflow probability \emph{lower} than our bound predicts: inputs are unlikely to align with both $W^Q$'s and $W^K$'s top singular directions simultaneously. Thus, the spherical assumption yields a conservative bound in practice.

In summary, geometry-aware scaling works identically for RoPE architectures: we compute the spectral norm of the position-independent matrix $W^Q W^{K\top}$ and apply the same calibration procedure.
\section{Spectral Norm Distribution Across Layers}
\label{app:spectral-distribution}

The spectral norm $\sigma_{QK}^{(\ell)} = \|W^Q_{(\ell)} W^{K\top}_{(\ell)}\|_2$ varies substantially across layers within each model, motivating the per-layer scale computation in Eq.~\eqref{eq:scale}. Figure~\ref{fig:spectral-dist} shows the layer-by-layer spectral norms for all four architectures, and Table~\ref{tab:spectral-stats} summarizes the distribution statistics.

\begin{figure*}[t]
\centering
\includegraphics[width=\textwidth]{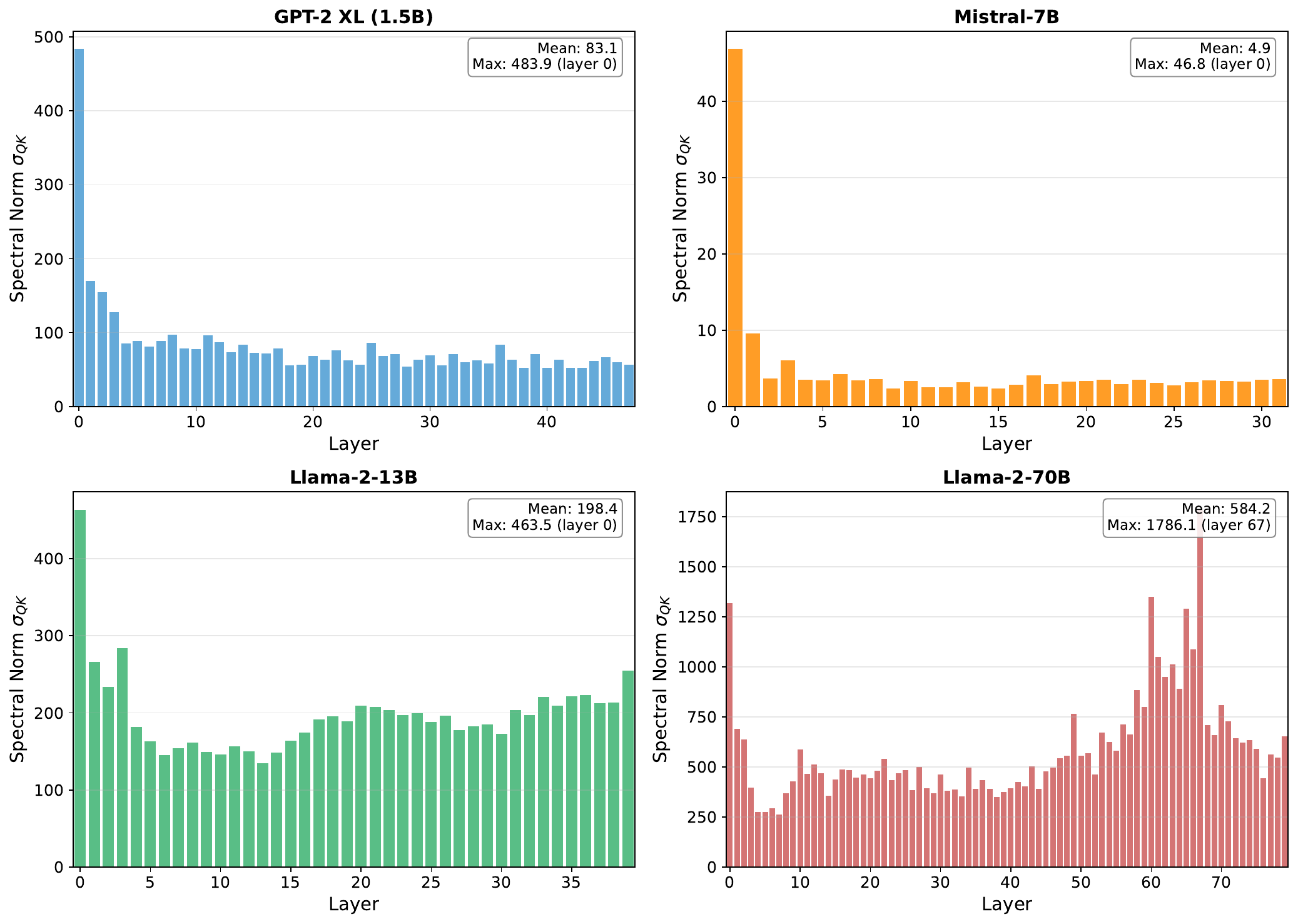}
\caption{Spectral norm $\sigma_{QK}^{(\ell)}$ by layer for all four models, computed on pretrained weights. Early layers consistently exhibit larger spectral norms, with layer 0 being the maximum in three of four models.}
\label{fig:spectral-dist}
\end{figure*}

\begin{table}[h]
\centering
\caption{Spectral norm statistics across layers for pretrained weights. Max Layer indicates which layer has the largest spectral norm.}
\label{tab:spectral-stats}
\footnotesize
\begin{tabular}{@{}lcccc@{}}
\toprule
\textbf{Model} & \textbf{Mean} & \textbf{Max} & \textbf{Min} & \textbf{Max Layer} \\
\midrule
GPT-2 XL & 83.1 & 483.9 & 55.8 & 0 \\
Mistral-7B & 4.9 & 46.8 & 2.4 & 0 \\
Llama-2-13B & 198.4 & 463.5 & 134.4 & 0 \\
Llama-2-70B & 584.2 & 1786.1 & 264.6 & 67 \\
\bottomrule
\end{tabular}
\end{table}

The variation is substantial: GPT-2 XL exhibits a $8.7\times$ range (55.8 to 483.9), Mistral-7B a $19.5\times$ range (2.4 to 46.8), Llama-2-13B a $3.5\times$ range (134.4 to 463.5), and Llama-2-70B a $6.7\times$ range (264.6 to 1786.1). This heterogeneity confirms that per-layer scaling is essential: a global scale factor calibrated for the maximum spectral norm would waste FP8 dynamic range on most layers, while one calibrated for the average would risk overflow on high-norm layers.
\section{Algorithm Details}
\label{app:algorithms}

This appendix provides detailed algorithms for the spectral norm estimation procedures described in Section~\ref{sec:implementation}.

\subsection{Power Iteration for Standard Multi-Head Attention}

Algorithm~\ref{alg:spectral-full} computes the spectral norm $\|M\|_2$ where $M = W^Q W^{K\top}$ for standard multi-head attention (where $n_q = n_{kv}$). The algorithm maintains persistent vectors $u$ and $v$ that track the top left and right singular vectors across training steps.

\begin{algorithm}[h]
\caption{Power Iteration for Spectral Norm Estimation}
\label{alg:spectral-full}
\begin{algorithmic}[1]
\STATE \textbf{Input:} Weight matrices $W^Q, W^K \in \mathbb{R}^{d \times (n_q \cdot d_h)}$
\STATE \textbf{State:} Persistent vectors $u, v \in \mathbb{R}^{d}$
\STATE \textbf{Output:} Spectral norm estimate $\sigma$
\STATE
\IF{not initialized}
    \STATE $u \leftarrow \text{random unit vector in } \mathbb{R}^{d}$
    \STATE $v \leftarrow \text{random unit vector in } \mathbb{R}^{d}$
\ENDIF
\STATE
\STATE \COMMENT{One iteration of power method}
\STATE $z \leftarrow W^{K\top} v$ \COMMENT{$z \in \mathbb{R}^{n_q \cdot d_h}$}
\STATE $u' \leftarrow W^Q z$ \COMMENT{$u' = M v \in \mathbb{R}^{d}$}
\STATE $\sigma \leftarrow \|u'\|_2$ \COMMENT{Spectral norm estimate}
\STATE $u \leftarrow u' / \sigma$ \COMMENT{Update persistent left singular vector}
\STATE
\STATE $y \leftarrow W^{Q\top} u$ \COMMENT{$y \in \mathbb{R}^{n_q \cdot d_h}$}
\STATE $v' \leftarrow W^K y$ \COMMENT{$v' = M^\top u \in \mathbb{R}^{d}$}
\STATE $v \leftarrow v' / \|v'\|_2$ \COMMENT{Update persistent right singular vector}
\STATE
\STATE \textbf{return} $\sigma$
\end{algorithmic}
\end{algorithm}

\paragraph{Complexity.} Each iteration requires four matrix-vector products: $W^{K\top} v$, $W^Q z$, $W^{Q\top} u$, and $W^K y$. Each product costs $O(n_q \cdot d_h \cdot d)$ operations, for a total of $O(n_q \cdot d_h \cdot d)$ per iteration. This is far cheaper than forming $M \in \mathbb{R}^{d \times d}$ explicitly, which would cost $O(n_q \cdot d_h \cdot d^2)$.

\paragraph{Convergence.} Power iteration converges to the top singular value at rate $(\sigma_2/\sigma_1)^k$ after $k$ iterations, where $\sigma_1 \geq \sigma_2$ are the two largest singular values. Since we maintain persistent vectors across forward passes and weights change gradually during training, one iteration per forward pass suffices to track the slowly drifting singular vectors.

\subsection{Implicit GQA Power Iteration}

For Grouped Query Attention with $n_q > n_{kv}$, Algorithm~\ref{alg:implicit-gqa-full} computes the spectral norm without expanding the key matrix.

\begin{algorithm}[h]
\caption{Implicit GQA Power Iteration}
\label{alg:implicit-gqa-full}
\begin{algorithmic}[1]
\STATE \textbf{Input:} $W^Q \in \mathbb{R}^{d \times (n_q \cdot d_h)}$, $W^K \in \mathbb{R}^{d \times (n_{kv} \cdot d_h)}$
\STATE \textbf{Parameters:} Number of query heads $n_q$, KV heads $n_{kv}$, head dimension $d_h$
\STATE \textbf{State:} Persistent vectors $u, v \in \mathbb{R}^{d}$
\STATE \textbf{Output:} Spectral norm estimate $\sigma \approx \|W^Q W^{K\top}_{\text{exp}}\|_2$
\STATE
\STATE $g \leftarrow n_q / n_{kv}$ \COMMENT{Group size}
\STATE
\IF{not initialized}
    \STATE $u \leftarrow \text{random unit vector in } \mathbb{R}^{d}$
    \STATE $v \leftarrow \text{random unit vector in } \mathbb{R}^{d}$
\ENDIF
\STATE
\STATE \COMMENT{Forward: compute $Mv$ where $M = W^Q W^{K\top}_{\text{exp}}$}
\STATE $z_{\text{kv}} \leftarrow W^{K\top} v$ \COMMENT{$z_{\text{kv}} \in \mathbb{R}^{n_{kv} \cdot d_h}$}
\STATE $z \leftarrow \textsc{RepeatBlocks}(z_{\text{kv}}, g)$ \COMMENT{$z \in \mathbb{R}^{n_q \cdot d_h}$}
\STATE $u' \leftarrow W^Q z$ \COMMENT{$u' \in \mathbb{R}^{d}$}
\STATE $\sigma \leftarrow \|u'\|_2$
\STATE $u \leftarrow u' / \sigma$
\STATE
\STATE \COMMENT{Backward: compute $M^\top u$ where $M^\top = W^K_{\text{exp}} W^{Q\top}$}
\STATE $y \leftarrow W^{Q\top} u$ \COMMENT{$y \in \mathbb{R}^{n_q \cdot d_h}$}
\STATE $y_{\text{kv}} \leftarrow \textsc{SumGroups}(y, g)$ \COMMENT{$y_{\text{kv}} \in \mathbb{R}^{n_{kv} \cdot d_h}$}
\STATE $v' \leftarrow W^K y_{\text{kv}}$ \COMMENT{$v' \in \mathbb{R}^{d}$}
\STATE $v \leftarrow v' / \|v'\|_2$
\STATE
\STATE \textbf{return} $\sigma$
\end{algorithmic}
\end{algorithm}

\paragraph{Memory savings.} The explicit approach would require expanding $W^K$ from $(n_{kv} \cdot d_h) \times d$ to $(n_q \cdot d_h) \times d$ by replicating columns. For Mistral-7B with $n_q = 32$, $n_{kv} = 8$, $d_h = 128$, and $d = 4096$, this expansion would require $32 \times 128 \times 4096 \times 2 = 32$MB per layer (in FP16). The implicit formulation avoids this entirely by operating on the unexpanded $W^K$ and only replicating/summing small intermediate vectors of size $n_{kv} \cdot d_h$.

\section{Proof of Implicit GQA Equivalence}
\label{app:proofs-gqa}

This appendix proves Proposition~\ref{prop:implicit-gqa}, establishing that the implicit GQA formulation computes the same spectral norm as explicit expansion.

\begin{proof}
Let $g = n_q / n_{kv}$ be the group size. We have $W^Q \in \mathbb{R}^{d \times (n_q \cdot d_h)}$ and $W^K \in \mathbb{R}^{d \times (n_{kv} \cdot d_h)}$. Define $W^K_{\text{exp}} \in \mathbb{R}^{d \times (n_q \cdot d_h)}$ as the matrix obtained by replicating each block of $d_h$ columns of $W^K$ exactly $g$ times. The interaction matrix is $M = W^Q W^{K\top}_{\text{exp}} \in \mathbb{R}^{d \times d}$.

\paragraph{Forward direction.} For any $v \in \mathbb{R}^d$, we show that $Mv = W^Q \cdot \textsc{RepeatBlocks}(W^{K\top} v, g)$.

Since $W^{K\top}_{\text{exp}} \in \mathbb{R}^{(n_q \cdot d_h) \times d}$ has repeated row-blocks, $W^{K\top}_{\text{exp}} v \in \mathbb{R}^{n_q \cdot d_h}$ consists of $n_q$ blocks of size $d_h$, where blocks $\{(i-1)g + 1, \ldots, ig\}$ for $i = 1, \ldots, n_{kv}$ all equal the $i$-th block of $W^{K\top} v \in \mathbb{R}^{n_{kv} \cdot d_h}$. This is exactly the output of $\textsc{RepeatBlocks}(W^{K\top} v, g)$. Since $W^Q \in \mathbb{R}^{d \times (n_q \cdot d_h)}$, multiplying by this repeated vector yields $Mv \in \mathbb{R}^d$.

\paragraph{Backward direction.} For any $u \in \mathbb{R}^d$, we show that $M^\top u = W^K \cdot \textsc{SumGroups}(W^{Q\top} u, g)$.

We have $M^\top = W^K_{\text{exp}} W^{Q\top}$. First, $W^{Q\top} u \in \mathbb{R}^{n_q \cdot d_h}$. Since $W^K_{\text{exp}} \in \mathbb{R}^{d \times (n_q \cdot d_h)}$ has $g$ identical copies of each column block of $W^K$, multiplying $W^K_{\text{exp}}$ by a vector $y \in \mathbb{R}^{n_q \cdot d_h}$ is equivalent to multiplying $W^K$ by the vector that sums each group of $g$ blocks: $W^K_{\text{exp}} y = W^K \cdot \textsc{SumGroups}(y, g)$. Thus $M^\top u = W^K \cdot \textsc{SumGroups}(W^{Q\top} u, g) \in \mathbb{R}^d$.

\paragraph{Conclusion.} Power iteration maintains $u, v \in \mathbb{R}^d$ and computes $Mv$ and $M^\top u$ via the implicit formulation. Since these equal the explicit products, the iteration converges to the same singular vectors and value: $\sigma_{\text{implicit}} = \sigma_{\text{explicit}} = \|W^Q W^{K\top}_{\text{exp}}\|_2$.
\end{proof}
\section{Experimental Details}
\label{app:exp-details}

This appendix provides additional experimental details and extended results for Section~\ref{sec:experiments}.

\subsection{Training Configuration}
\label{app:training-config}

Table~\ref{tab:models} summarizes the architectural specifications for all four models evaluated.

\begin{table}[h]
\centering
\caption{Model architectures. MHA: multi-head attention. GQA: grouped query attention.}
\label{tab:models}
\footnotesize
\resizebox{\columnwidth}{!}{%
\begin{tabular}{@{}lcccc@{}}
\toprule
& \textbf{GPT-2 XL} & \textbf{Mistral-7B} & \textbf{Llama-2-13B} & \textbf{Llama-2-70B} \\
\midrule
Params & 1.5B & 7B & 13B & 70B \\
Layers & 48 & 32 & 40 & 80 \\
Attention & MHA & GQA & MHA & GQA \\
$n_q$ / $n_{kv}$ & 25/25 & 32/8 & 40/40 & 64/8 \\
$d$ & 1600 & 4096 & 5120 & 8192 \\
$d_h$ & 64 & 128 & 128 & 128 \\
Hardware & H100 & H200 & H200 & B200 \\
\bottomrule
\end{tabular}%
}
\end{table}

Table~\ref{tab:training-config} provides the full training configuration for each model.

\begin{table}[h]
\centering
\caption{Training configuration for all experiments.}
\label{tab:training-config}
\footnotesize
\resizebox{\columnwidth}{!}{%
\begin{tabular}{@{}lcccc@{}}
\toprule
& \textbf{GPT-2 XL} & \textbf{Mistral-7B} & \textbf{Llama-2-13B} & \textbf{Llama-2-70B} \\
\midrule
Batch size & 8 & 4 & 4 & 2 \\
Seq. length & 1024 & 1024 & 1024 & 1024 \\
Learning rate & $10^{-4}$ & $10^{-5}$ & $10^{-5}$ & $10^{-5}$ \\
Optimizer & AdamW & AdamW & AdamW & AdamW \\
Weight decay & 0.01 & 0.01 & 0.01 & 0.01 \\
Grad. clip & 1.0 & 1.0 & 1.0 & 1.0 \\
FP8 format & E4M3 & E4M3 & E4M3 & E4M3 \\
$\alpha$ & 0.08 & 0.04 & 0.03 & 0.02 \\
$\eta_{\text{fp8}}$ & 0.8 & 0.8 & 0.8 & 0.8 \\
\bottomrule
\end{tabular}%
}
\end{table}

\paragraph{Delayed scaling baseline configuration.} For the delayed scaling baseline, we use the standard configuration from \citet{micikevicius2022fp8}: history buffer length of 16 steps and safety margin $\eta = 0.9$ (see Eq.~\ref{eq:delayed}). The history buffer is initialized to 1.0 at the start of training or after checkpoint loading, which is the source of history staleness in transient scenarios.

\section{Instantaneous Response Validation}
\label{app:spike-test}

This appendix validates the instantaneous response property of geometry-aware scaling with a controlled stress test.

\paragraph{Setup.} We train GPT-2 XL normally for 10 steps, then multiply all attention weights by $4\times$ at step 10, simulating an extreme transient. This artificial perturbation isolates the mechanism: delayed scaling relies on stale statistics and cannot adapt, while geometry-aware scaling recomputes the bound from current weights.

\paragraph{Results.} Figure~\ref{fig:spike} shows the results. Delayed scaling, relying on statistics that predate the weight change, produces scaled logits exceeding 6500 and overflows catastrophically. Geometry-aware scaling responds in the same forward pass: the scale factor jumps from 5.8 to 93.8 at step 10, keeping scaled logits below 85 throughout.

\begin{figure}[h]
\centering
\includegraphics[width=\columnwidth]{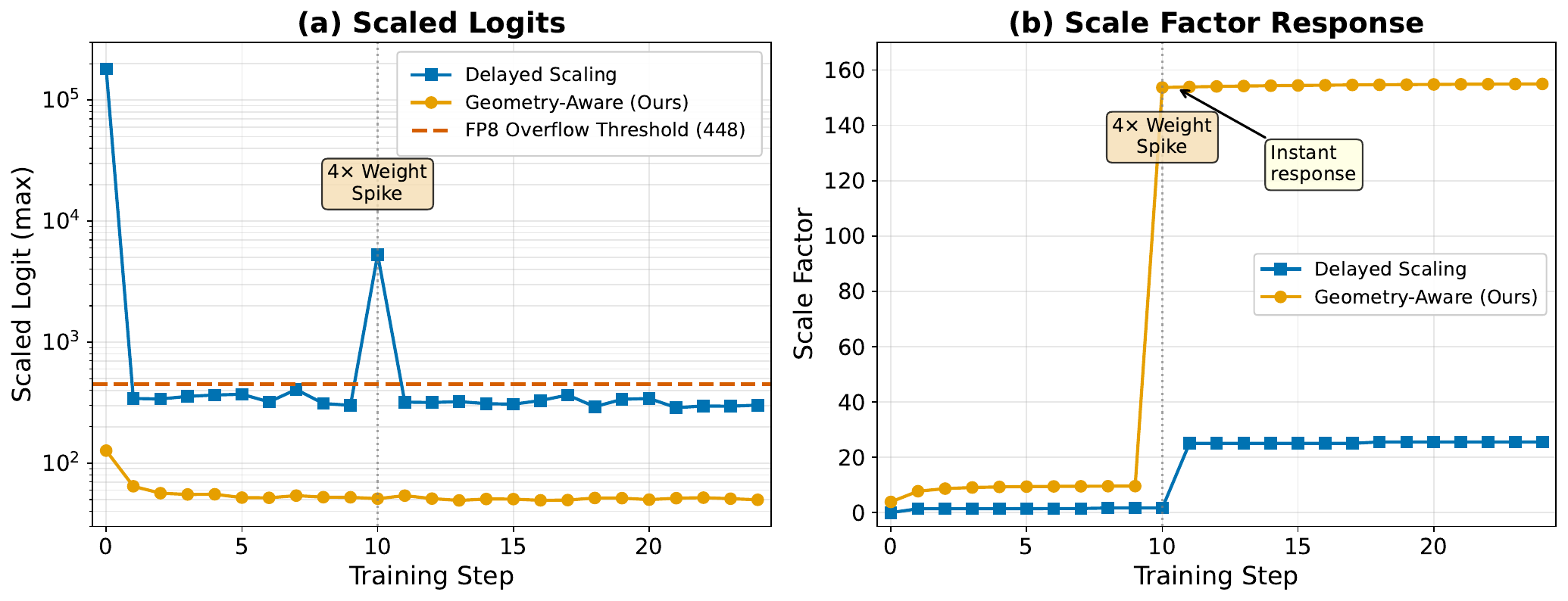}
\caption{Response to $4\times$ weight spike at step 10 (GPT-2 XL). (a) Maximum scaled attention logit. (b) Scale factor over time. Delayed scaling overflows because its history buffer contains no information about the weight change. Geometry-aware scaling adapts instantaneously because the scale factor is computed from current weights.}
\label{fig:spike}
\end{figure}

While the $4\times$ spike is a stress test, the underlying mechanism explains why geometry-aware scaling succeeds in the realistic transient scenarios evaluated in Section~\ref{sec:transients}: loading pretrained checkpoints, checkpoint resumption, and learning rate transitions all involve weight configurations for which the history buffer has no relevant statistics.

\section{Overhead Measurement}
\label{app:overhead-method}

Table~\ref{tab:overhead} reports forward pass timing for all models.

\begin{table}[h]
\centering
\caption{Computational overhead (forward pass time, averaged over 100 iterations).}
\label{tab:overhead}
\footnotesize
\resizebox{\columnwidth}{!}{%
\begin{tabular}{@{}lcccl@{}}
\toprule
\textbf{Model} & \textbf{Attention} & \textbf{Delayed} & \textbf{Ours} & \textbf{Overhead} \\
\midrule
GPT-2 XL & MHA & 136.3 ms & 137.6 ms & $+1.0\%$ \\
Mistral-7B & GQA 4:1 & 58.2 ms & 55.1 ms & $\mathbf{-5.3\%}$ \\
Llama-2-13B & MHA & 96.2 ms & 98.1 ms & $+1.9\%$ \\
Llama-2-70B & GQA 8:1 & 206.3 ms & 215.0 ms & $+4.3\%$ \\
\bottomrule
\end{tabular}%
}
\end{table}

Measurements follow this protocol:
\begin{enumerate}[leftmargin=*, itemsep=2pt]
\item Warm up GPU with 10 forward passes (discarded)
\item Run 100 forward passes with CUDA synchronization after each
\item Report mean time per forward pass
\item Repeat 3 times and report median across repetitions
\end{enumerate}

For fair comparison, both methods use identical model configurations, batch sizes, and sequence lengths. The only difference is the scale factor computation method.

\section{FP8 Utilization Statistics}
\label{app:utilization}

Table~\ref{tab:utilization_app} shows FP8 dynamic range utilization during training with different methods on GPT-2 XL.

\begin{table}[h]
\centering
\caption{FP8 dynamic range utilization (\%) during training (GPT-2 XL).}
\label{tab:utilization_app}
\footnotesize
\begin{tabular}{@{}lccc@{}}
\toprule
\textbf{Method} & \textbf{Median} & \textbf{P10} & \textbf{P90} \\
\midrule
Delayed & 90.5\% & 84.0\% & 98.0\% \\
Ours (conservative) & 14.3\% & 13.6\% & 15.3\% \\
Ours + auto-$\alpha$ & 31.2\% & 28.5\% & 33.7\% \\
\bottomrule
\end{tabular}
\end{table}
\section{MMLU Training Curves}
\label{app:mmlu-curves}

Figure~\ref{fig:mmlu-loss} shows training loss curves for the three scaling methods on Llama-2-13B fine-tuned on MMLU STEM subjects.

\begin{figure}[h]
\centering
\includegraphics[width=\columnwidth]{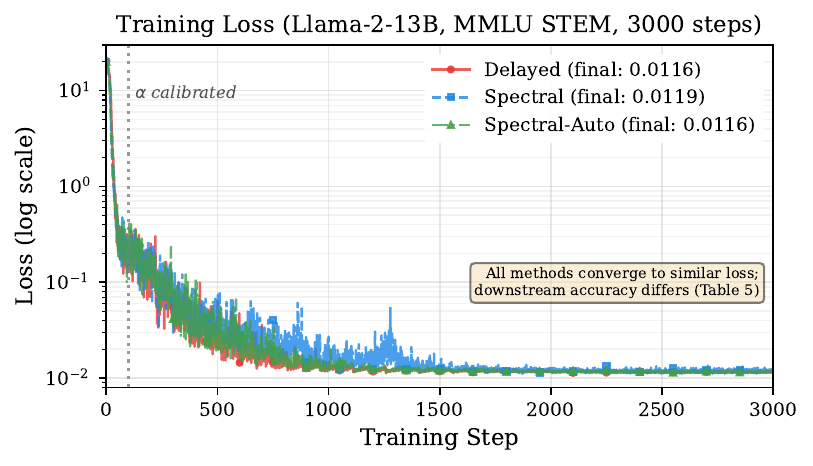}
\caption{Training loss comparison for delayed scaling, geometry-aware scaling (conservative), and geometry-aware scaling with auto-$\alpha$ on Llama-2-13B (MMLU STEM, 3000 steps). All methods converge to similar final loss ($\approx 0.012$), yet downstream MMLU accuracy differs substantially (Table~\ref{tab:mmlu}). The vertical dashed line indicates when auto-$\alpha$ calibration completes (step 100). The conservative variant shows slightly higher loss throughout training due to reduced FP8 utilization (0.5\% vs.\ 31.2\% for auto-$\alpha$).}
\label{fig:mmlu-loss}
\end{figure}

The key observation is that training loss alone does not predict downstream performance: the conservative variant achieves comparable loss (0.0119 vs.\ 0.0116) but 5 percentage points lower MMLU accuracy (28.1\% vs.\ 33.0\%). This gap arises because excessive quantization noise from 0.5\% FP8 utilization degrades the fine-grained attention patterns required for reasoning tasks, even when average loss remains low.
\section{Detailed MMLU Results by Subject}
\label{app:mmlu_detail}

Table~\ref{tab:mmlu_subjects} shows accuracy on all 17 MMLU STEM subjects.

\begin{table}[h]
\centering
\caption{MMLU accuracy (\%) by subject on Llama-2-13B after 3000 fine-tuning steps.}
\label{tab:mmlu_subjects}
\footnotesize
\resizebox{\columnwidth}{!}{%
\begin{tabular}{@{}lccc@{}}
\toprule
\textbf{Subject} & \textbf{Delayed} & \textbf{Cons.} & \textbf{Auto-$\alpha$} \\
\midrule
Abstract Algebra & 25.0 & 25.0 & 27.0 \\
College Math & 36.0 & 31.0 & 35.0 \\
Elementary Math & 27.0 & 25.9 & 27.8 \\
HS Math & 26.7 & 22.6 & 27.0 \\
HS Statistics & 30.1 & 31.0 & 30.6 \\
\midrule
Astronomy & 41.4 & 29.6 & 37.5 \\
College Physics & 39.2 & 37.3 & 38.2 \\
HS Physics & 27.8 & 26.5 & 29.1 \\
\midrule
College CS & 29.0 & 30.0 & 34.0 \\
Computer Security & 47.0 & 34.0 & 42.0 \\
HS CS & 37.0 & 26.0 & 34.0 \\
\midrule
College Chemistry & 35.0 & 22.0 & 34.0 \\
HS Chemistry & 27.1 & 29.6 & 32.0 \\
\midrule
College Biology & 36.8 & 24.3 & 41.7 \\
HS Biology & 45.8 & 31.6 & 46.8 \\
\midrule
Electrical Eng. & 35.2 & 31.7 & 31.0 \\
Machine Learning & 21.4 & 22.3 & 25.9 \\
\midrule
\textbf{Average} & \textbf{33.0} & \textbf{28.1} & \textbf{33.6} \\
\bottomrule
\end{tabular}%
}
\end{table}

The largest accuracy gaps between conservative spectral and auto-$\alpha$ appear in subjects requiring fine-grained numerical reasoning: College Biology (+17.4 pp), HS Biology (+15.2 pp), College Chemistry (+12.0 pp), and Computer Security (+8.0 pp). This pattern suggests that excessive quantization noise particularly affects tasks requiring precise attention to numerical or logical relationships.
\section{Auto-$\alpha$ Calibration Details}
\label{app:autoalpha}

\subsection{Algorithm}

Algorithm~\ref{alg:autoalpha} provides the complete auto-$\alpha$ calibration procedure.

\begin{algorithm}[h]
\caption{Auto-$\alpha$ Calibration}
\label{alg:autoalpha}
\begin{algorithmic}[1]
\REQUIRE Conservative initial $\alpha_0$, burn-in steps $T_{\text{calib}}$, quantile $q$, safety multiplier $\kappa$
\STATE Initialize slack ratio buffer $R \leftarrow \emptyset$
\FOR{$t = 1$ to $T_{\text{calib}}$}
    \STATE Compute $B_{\max}^{(t)} = \|W^Q W^{K\top}\|_2 \cdot d/\sqrt{d_h}$
    \STATE Run forward pass with scale $= \alpha_0 \cdot B_{\max}^{(t)} / (0.8 \times 448)$
    \STATE Record $M_t = \max_{i,j}|S_{ij}^{(t)}|$ (actual maximum logit)
    \STATE Append $r_t = M_t / B_{\max}^{(t)}$ to $R$
\ENDFOR
\STATE $\alpha_{\text{emp}} \leftarrow \text{Quantile}_q(R)$
\STATE $\alpha_{\text{final}} \leftarrow \alpha_{\text{emp}} \times \kappa$
\STATE \textbf{Freeze} $\alpha_{\text{final}}$ for all subsequent training
\end{algorithmic}
\end{algorithm}

\subsection{Calibration Statistics}

During the 100-step burn-in on Llama-2-13B:
\begin{itemize}
\item Slack ratio range: $[0.000073, 0.000366]$
\item Slack ratio mean: $0.000177$
\item $P_{99.99}$ quantile: $0.000360$
\item Calibrated $\alpha$: $0.00036$ (with $\kappa = 1.0$)
\end{itemize}

This represents an $83\times$ tightening compared to the conservative $\alpha = 0.03$, resulting in proportionally higher FP8 utilization (31.2\% vs.\ 0.5\%).

\subsection{Safety Multiplier Selection}

The safety multiplier $\kappa$ controls the trade-off between utilization and robustness:
\begin{itemize}
\item $\kappa = 1.0$: Maximum utilization, suitable for stable fine-tuning
\item $\kappa = 2.0$: Moderate headroom for distribution shift
\item $\kappa = 3.0$: Conservative, suitable for heterogeneous data
\end{itemize}

In our experiments, $\kappa = 1.0$ achieved zero overflows across 3000 steps, suggesting the $P_{99.99}$ quantile provides sufficient margin for MMLU STEM fine-tuning.

\end{document}